\newcommand{\kmeans}{$k$-means}
\newcommand{\cut}[1]{}
\newtheorem{theorem}{Theorem}
\icmltitlerunning{SimpleMKKM: Simple Multiple Kernel K-means}
\begin{document}

\twocolumn[
\icmltitle{SimpleMKKM: Simple Multiple Kernel K-means}



\icmlsetsymbol{equal}{*}

\begin{icmlauthorlist}
\icmlauthor{Xinwang Liu}{NUDT}
\icmlauthor{En Zhu}{NUDT}
\icmlauthor{Jiyuan Liu}{NUDT}
\icmlauthor{Timothy Hospedales}{EU}
\icmlauthor{Yang Wang}{HUT}
\icmlauthor{Meng Wang}{HUT}
\end{icmlauthorlist}

\icmlaffiliation{NUDT}{School of Computer, National University of Defense Technology}
\icmlaffiliation{EU}{Edinburgh University}
\icmlaffiliation{HUT}{Hefei University of Technology}

\icmlcorrespondingauthor{Xinwang Liu}{xinwangliu@nudt.edu.cn}

\icmlkeywords{Machine Learning, ICML}

\vskip 0.3in
]



\printAffiliationsAndNotice{} 

\begin{abstract}

We propose a simple yet effective multiple kernel clustering algorithm, termed simple multiple kernel \kmeans{} (SimpleMKKM). It extends the widely used supervised kernel alignment criterion to multi-kernel clustering. Our criterion is given by an intractable minimization-maximization problem in the kernel coefficient and clustering partition matrix. To optimize it, we re-formulate the problem as a smooth minimization one, which can be solved efficiently using a reduced gradient descent algorithm. We theoretically analyze the performance of SimpleMKKM in terms of its clustering generalization error.  Comprehensive experiments on 11 benchmark datasets demonstrate that SimpleMKKM outperforms  state of the art multi-kernel clustering alternatives.


\end{abstract}

\section{Introduction}\label{Intro}

In multiple kernel clustering (MKC) \cite{ZhaoKZ09}, we aim to combine a set of pre-specified kernel matrices to improve clustering performance. These kernel matrices could encode heterogeneous sources or views of the data \citep{YuTLGSMM12}. One popular method, multiple kernel \kmeans (MKKM) \cite{HuangCC12}, has been  studied intensively and used in various applications \cite{YuTLGSMM12,GonenM14,LiuZLWZLKSYG2019,HLZZ19,KumarD11,KumarRD11}. The approach is attractive also from a theoretical perspective, as it unifies the search of the optimal base kernel coefficient and the clustering partition matrix into a single objective function, which is usually solved by using two-step alternating optimization on the coefficients and clustering partition matrix.

Several variants of MKKM have been developed to further improve the clustering performance \cite{YuTLGSMM12,GonenM14,LiL0DYZ16,LiuDY0Z16,LiuLWDYZ17,LiuZLWZLKSYG2019}.
Notably, \citet{GonenM14} substantially increase the expressiveness of MKKM by allowing for a locally adaptive kernel mixtures, which can better capture sample-specific characteristics of the data. \citet{LiL0DYZ16} propose an extension that optimizes a localized kernel alignment criterion. It aligns the local density of the samples given by the $k$-nearest neighbours with an ideal similarity matrix.
This alignment helps to keep neighbouring sample pairs together, which avoids unreliable similarity evaluation.
\cut{Such an alignment helps the clustering algorithm to focus on neighboring sample pairs, in that they shall stay together. This avoids unreliable similarity evaluation for farther sample pairs.} 
Observing that existing MKKM algorithms do not sufficiently consider the correlation among these kernels,  \citet{LiuDY0Z16} employ matrix regularization to reduce the redundancy and enhance the diversity of the selected kernels. Most of existing MKKM algorithms assume that the optimal kernel is a linear combination of a group of base kernels. This assumption is challenged in \citet{LiuLWDYZ17}, who propose an optimal neighborhood kernel clustering (ONKC) algorithm to enhance the representability of the optimal kernel and strengthen the negotiation between kernel learning and clustering. More recently, MKKM algorithms have been extended to handle missing views \cite{LiuZLWZLKSYG2019}. By assuming the optimal kernel is a linear combination of the base kernel matrices, \citet{Bang2018robust} develop a minimization-maximization framework that aims to be robust to adversarial perturbation. All these variants potentially improve standard MKKM and achieve promising clustering performance in various applications.

The objective functions of the mentioned methods differ, but they all share one commonality: they learn the kernel coefficient and the clustering partition matrix \emph{jointly}. By this way, the leaned kernel coefficient can best serve the clustering, leading to superior clustering performance. However, simultaneously solving for the kernel coefficients \emph{and} the clustering partition is intractable. One commonly adopted remedy is to decouple the optimization of the kernel coefficients and the clustering partition through a block coordinate descent algorithm, which optimizes the two alternately.  This means, one block of variables is minimized while the other is kept fixed. 
However, such alternate optimization algorithms can get trapped into a local optima of the objective function. As a remedy, \citet{LiuDY0Z16,LiL0DYZ16} propose regularization strategies to avoid getting trapped into local minimum. The incorporation of these regularization terms comes at a price: the approach has additional hyper-parameters, which are difficult to select, given the unsupervised nature of clustering tasks.

In this paper, we propose Simple MKKM (SimpleMKKM)---a novel formulation for multiple kernel clustering that addresses the aforementioned shortcomings.  Unlike previous approaches, 
SimpleMKKM optimizes the unsupervised kernel alignment criterion directly. Specifically, it minimizes kernel alignment with respect to the kernel coefficient and maximizes it with respect to the clustering matrix. This minimization-maximization optimization problem cannot readily be solved using existing alternate optimization frameworks. However, we show that this min-max problem actually leads to a more efficient and effective optimization algorithm. Specifically, we reformulate the min-max problem as a minimization problem, whose objective relies on the known optimal solution to kernel \kmeans. We then prove the differentiability of the optimal value function and calculate its reduced gradient. This leads to a solution using a reduced gradient descent algorithm, without alternating optimization. We show a generalization error bound for our approach, thus theoretically guaranteeing its clustering performance. We conduct comprehensive experiments on eleven benchmark datasets, where we compare SimpleMKKM to eight baseline methods in terms of three common evaluation criteria. We observe that SimpleMKKM consistently outperforms its competitors.
%
%

\section{Related Work}

In this section, we briefly review the most related, including multiple kernel \kmeans{} (MKKM) and robust MKKM clustering using min-max optimization \cite{Bang2018robust}.

\subsection{MKKM}

Given a group of pre-calculated kernel matrices $\{\mathbf{K}_{p}\}_{p=1}^{m}$, MKKM assumes that the optimal kernel matrix $\mathbf{K}_{\boldsymbol{\gamma}}$ can be parameterized as $\mathbf{K}_{\boldsymbol{\gamma}}=\sum_{p=1}^{m}\gamma_{p}^2\mathbf{K}_{p}$, where $\boldsymbol{\gamma}\in\Delta=\{\boldsymbol{\gamma}\in\mathbb{R}^{m}|\sum_{p=1}^{m}\gamma_{p}=1,\,\gamma_{p}\geq 0,\,\forall p\}$ represents the kernel weights of these base kernel matrices. It jointly learns the kernel weights $\boldsymbol{\gamma}$ and the clustering partition matrix $\mathbf{H}$ by optimizing Eq. (\ref{eq:MKKM_MinMin}).
\begin{equation}\label{eq:MKKM_MinMin}
\small{
\begin{split}
\min\nolimits_{\boldsymbol{\gamma}\in\Delta}\;\min\nolimits_{\mathbf{H}}\;&\;\mathrm{Tr}\left(\mathbf{K}_{\boldsymbol{\gamma}}(\mathbf{I}-\mathbf{H}\mathbf{H}^{\top})
\right)\\
s.t.\;&\;\mathbf{H}\in\mathbb{R}^{n\times k},\,\mathbf{H}^{\top}\mathbf{H}=\mathbf{I}_{k}.
\end{split}
}
\end{equation}

In literature, the optimization problem in Eq. (\ref{eq:MKKM_MinMin})  is usually be solved by alternatively updating $\mathbf{H}$ and $\boldsymbol{\gamma}$: (i) \textbf{Optimizing $\mathbf{H}$ given $\boldsymbol{\gamma}$}. With the kernel coefficients $\boldsymbol{\gamma}$ fixed, $\mathbf{H}$ can be obtained by solving a kernel $k$-means clustering optimization problem; (ii) \textbf{Optimizing $\boldsymbol{\gamma}$ given $\mathbf{H}$}. With $\mathbf{H}$ fixed, $\boldsymbol{\gamma}$ can be optimized via solving the following quadratic programming with linear constraints,
\begin{equation}\label{eq:MKKM:kw}
\small{\begin{split}
\min\nolimits_{\boldsymbol{\gamma}\in\Delta}\;&\;\sum\nolimits_{p=1}^{m}\gamma_{p}^{2}\mathrm{Tr}\left(\mathbf{K}_{p}
(\mathbf{I}_{n}-\mathbf{H}\mathbf{H}^{\top})\right),
\end{split}
}
\end{equation}
which has a closed-form solution.

As noted in \citet{YuTLGSMM12,GonenM14}, using a convex combination of kernels $\sum_{p=1}^{m}\gamma_{p}\mathbf{K}_{p}$ to replace $\sum_{p=1}^{m}\gamma_{p}^2\mathbf{K}_{p}$ is not a viable option, because this could make only one single kernel activate and all the others assigned with zero weight, as seen from Eq. (\ref{eq:MKKM:kw}). Other recent work using $\ell_2$-norm combinations can be found in \citet{KloftBSZ11,CortesMR09,LiuZLWZLKSYG2019}.

\subsection{Robust MKKM Using Min-Max Optimization}

Recently, \citet{Bang2018robust} proposed a MKKM clustering method with the aim to be robust against adversarial perturbation. To achieve this goal, the authors use a $\min_{\mathbf{H}}$-$\max_{\boldsymbol{\gamma}}$ formulation that combines views so as to achieve high within-cluster variance in the combined space $\mathbf{W}_{\boldsymbol{\gamma}}$ and then updates clusters by minimizing such variance. Its optimization problem is,
\begin{equation}\label{eq:MKKM_MM}
\small{
\begin{split}
\min\nolimits_{\mathbf{H}}\;\max\nolimits_{\boldsymbol{\gamma}\in\Theta}\;&\;\mathrm{Tr}\left(\mathbf{W}_{\boldsymbol{\gamma}}(\mathbf{I}-\mathbf{H}\mathbf{H}^{\top})
\right)\\
s.t.\;&\;\mathbf{H}\in\mathbb{R}^{n\times k},\,\mathbf{H}^{\top}\mathbf{H}=\mathbf{I}_{k},
\end{split}
}
\end{equation}
where $\Theta=\{\boldsymbol{\gamma}\in\mathbb{R}^{m}|\sum_{p=1}^{m}\gamma_{p}^2\leq1,\,\gamma_{p}\geq 0,\,\forall p\}$ and $\mathbf{W}_{\boldsymbol{\gamma}}=\sum_{p=1}^{m}\gamma_{p}\mathbf{K}_{p}$.

Note that in contrast to Eq. (\ref{eq:MKKM_MinMin}), the above approach adopts an $\ell_{2}$-norm constraint on the kernel weights to avoid sparse solutions. It is observed that using an $\ell_2$-norm constraint can obtain non-sparse kernel coefficients, which is helpful to better utilize the complementary information in the data. Similar to MKKM, the problem in Eq. (\ref{eq:MKKM_MM}) can be solved by following the same alternate optimization framework.

Although the objective functions of MKKM and its variants may vary, they share a common alternate optimization routine. The aforementioned alternate framework could cause the optimization w.r.t $\boldsymbol{\gamma}$ to produce high redundant or overly sparse solutions \cite{LiuDY0Z16}. This in turn would make the multiple kernel matrices less utilized, and adversely affects the clustering performance. A direct remedy is to incorporate some regularization on $\boldsymbol{\gamma}$ to help its optimization \cite{LiuDY0Z16,LiL0DYZ16}. However, the incorporation of regularization may introduce extra hyper-parameters. How to determine those in unsupervised learning tasks such as clustering is difficult. In the following, we introduce our simple MKKM objective, and design a novel optimization procedure for it that avoids these issues.

\section{SimpleMKKM: Simple MKKM}\label{submission}

In this section, we first give the proposed SimpleMKKM kernel alignment-based objective. We then reformulate it as the minimization of an optimal value function, and prove its differentiability. After that, we develop a reduced gradient descent algorithm to solve it efficiently and effectively.

\subsection{SimpleMKKM Formulation}\label{sec:formulation}

Kernel alignment criterion has been widely used for kernel tuning in supervised learning due to its simplicity and effectiveness \cite{CortesMR12,cristiani2002kta}. Our new formulation is based on unsupervised multiple kernel alignment criterion, inspired by existing supervised kernel learning. One can optimize this criterion by maximizing over both
$\boldsymbol{\gamma}$ and $\mathbf{H}$. Though theoretically elegant, we empirically observe that such
$\max_{\boldsymbol{\gamma}}\max_{\mathbf{H}}$ formulation does not achieve promising clustering performance, which is different from supervised kernel learning. We conjecture this is caused by the over-fitted optimization between $\boldsymbol{\gamma}$ and $\mathbf{H}$. On the other hand, from the optimization perspective of MKKM in Eq. (\ref{eq:MKKM_MinMin}), $\mathrm{Tr}\left(\mathbf{K}_{\boldsymbol{\gamma}}(\mathbf{I}-\mathbf{H}\mathbf{H}^{\top})\right)$ should be minimized. This objective can be decomposed into two terms, $\mathrm{Tr}\left(\mathbf{K}_{\boldsymbol{\gamma}}\right)$ and $-\mathrm{Tr}\left(\mathbf{K}_{\boldsymbol{\gamma}}\mathbf{H}\mathbf{H}^{\top}\right)$. The first term can be regarded as regularization on $\boldsymbol{\gamma}$, which should be optimized via minimizing $\boldsymbol{\gamma}$. The other one is the opposite of kernel alignment,  which should be minimized via maximizing $\mathbf{H}$. By taking both regularisation and partitioning into account, our SimpleMKKM proposes to optimize the kernel alignment criterion by minimizing $\boldsymbol{\gamma}$ and maximizing $\mathbf{H}$ as: 
\begin{equation}\label{eq:MKKM_MinMax}
\small{
\begin{split}
\min\nolimits_{\boldsymbol{\gamma}\in\Delta}\;\max\nolimits_{\mathbf{H}}\;&\;\mathrm{Tr}\left(\mathbf{K}_{\boldsymbol{\gamma}}\mathbf{H}\mathbf{H}^{\top}\right)\\
s.t.\;&\;\mathbf{H}\in\mathbb{R}^{n\times k},\,\mathbf{H}^{\top}\mathbf{H}=\mathbf{I}_{k},
\end{split}
}
\end{equation}
where $\Delta=\{\boldsymbol{\gamma}\in\mathbb{R}^{m}|\sum_{p=1}^{m}\gamma_{p}=1,\,\gamma_{p}\geq 0,\,\forall p\}$ and $\mathbf{K}_{\boldsymbol{\gamma}}=\sum_{p=1}^{m}\gamma_{p}^2\mathbf{K}_{p}$.

Though simple, the SimpleMKKM formulation in Eq. (\ref{eq:MKKM_MinMax}) has the following merits: (1) It is the first MKKM objective that, strictly coincides with the kernel alignment criterion via $\mathrm{Tr}\left(\mathbf{K}_{\boldsymbol{\gamma}}\mathbf{H}\mathbf{H}^{\top}\right)$ to tune kernel weights. In contrast, MKKM and its all variants adopt $\mathrm{Tr}\left(\mathbf{K}_{\boldsymbol{\gamma}}(\mathbf{I}-\mathbf{H}\mathbf{H}^{\top})\right)$ as the criterion by extending the objective of classic kernel \kmeans{} to multiple kernels. It is worth noting that the kernel alignment criterion is more general and can be used for any kernel tuning tasks. As a result, it can be used for multiple kernel clustering.
\cut{However, directly extending the objective of traditional kernel k-means to its multiple kernel case should be paid more attention. This is because the objective in single kernel may not be suitable anymore for multiple kernel.} 
(2) According to \cite{Bang2018robust}, regularisation by min-max optimization of $\boldsymbol{\gamma}$ and $\mathbf{H}$ generates more robust clusters by avoiding overfitting to noisy views or datapoints. (3) As we shall see next, while our formulation looks intractible, it actually leads to a more efficient and effective optimisation algorithm than the standard alternating strategies used for MKKM. Furthermore, unlike alternatives \cite{LiuDY0Z16,LiL0DYZ16} relying on regularisation by penalizing $\boldsymbol{\gamma}$, SimpleMKKM introduces no additional parameters beyond the number of clusters to form.

Our new formulation in Eq. (\ref{eq:MKKM_MinMax}) cannot be readily solved by the widely adopted alternate optimization strategy, as done in MKKM and its variants. In the following, we design an efficient and effective reduced gradient descent algorithm. Firstly, we equivalently rewrite the optimization in Eq. (\ref{eq:MKKM_MinMax}) as,
\begin{equation}\label{eq:MKKM_MinMax1}
\small{
\min\nolimits_{\boldsymbol{\gamma}\in\Delta}\;\mathcal{J}(\boldsymbol{\gamma}),
}
\end{equation}
with
\begin{equation}\label{eq:MKKM_MinMax2}
\small{
\mathcal{J}(\boldsymbol{\gamma})=
\left\{\max\nolimits_{\mathbf{H}}\;\mathrm{Tr}\left(\mathbf{K}_{\boldsymbol{\gamma}}\mathbf{H}\mathbf{H}^{\top}
\right)\;\;s.t.\;\mathbf{H}^{\top}\mathbf{H}=\mathbf{I}_{k}\right\}.
}
\end{equation}
In this way, the min-max optimization is transformed to a minimization one, where its objective is a kernel \kmeans{}  optimal value function. In the following, we first prove the differentiability of $\mathcal{J}(\boldsymbol{\gamma})$, and apply the reduced gradient descent algorithm to decrease Eq. (\ref{eq:MKKM_MinMax1}).

\subsection{The Calculation of Reduced Gradient}

In the literature, several works discuss the existence and computation of derivatives of optimal value functions $\mathcal{J}(\boldsymbol{\gamma})$ \cite{BonnansS98,Chapelle2002,Rakotomamonjy08}. The most appropriate reference for our case is Theorem 4.1 in \cite{BonnansS98}, which has already been utilized to tune the hyper-parameters of SVM \cite{Chapelle2002} and optimize the kernel weights in multiple kernel learning \cite{Rakotomamonjy08}. The following Theorem \ref{theorem1} shows that $\mathcal{J}(\boldsymbol{\gamma})$ in Eq. (\ref{eq:MKKM_MinMax1}) is differentiable.
\begin{theorem}\label{theorem1}
$\mathcal{J}(\boldsymbol{\gamma})$ in Eq. (\ref{eq:MKKM_MinMax2}) is differentiable. Further, $\frac{\partial \mathcal{J}(\boldsymbol{\gamma})}{\partial \gamma_{p}}=2\gamma_{p}\mathrm{Tr}\left(\mathbf{K}_{p}\mathbf{H}^{\ast}{\mathbf{H}^{\ast}}^{\top}\right)$, where $\mathbf{H}^{\ast}=\left\{\mathrm{arg}\max_{\mathbf{H}}\;\mathrm{Tr}\left(\mathbf{K}_{\boldsymbol{\gamma}}\mathbf{H}\mathbf{H}^{\top}\right)\;s.t.\;\mathbf{H}^{\top}\mathbf{H}=\mathbf{I}_{k}\right\}$.
\end{theorem}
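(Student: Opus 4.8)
The plan is to recognize $\mathcal{J}(\boldsymbol{\gamma})$ as the optimal value function of a parametric maximization problem and to apply the sensitivity-analysis result (Theorem 4.1 of \cite{BonnansS98}, the Danskin-type envelope theorem) cited just above the statement. Writing $f(\boldsymbol{\gamma},\mathbf{H})=\mathrm{Tr}(\mathbf{K}_{\boldsymbol{\gamma}}\mathbf{H}\mathbf{H}^{\top})$ and letting the feasible set be the Stiefel manifold $\mathcal{S}=\{\mathbf{H}\in\mathbb{R}^{n\times k}:\mathbf{H}^{\top}\mathbf{H}=\mathbf{I}_{k}\}$, we have $\mathcal{J}(\boldsymbol{\gamma})=\max_{\mathbf{H}\in\mathcal{S}}f(\boldsymbol{\gamma},\mathbf{H})$. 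First I would verify the hypotheses of that theorem: the set $\mathcal{S}$ is compact and independent of $\boldsymbol{\gamma}$; $f$ is jointly continuous in $(\boldsymbol{\gamma},\mathbf{H})$; and since $\mathbf{K}_{\boldsymbol{\gamma}}=\sum_{p=1}^{m}\gamma_{p}^{2}\mathbf{K}_{p}$ is a smooth (quadratic) matrix-valued function of $\boldsymbol{\gamma}$, the map $\boldsymbol{\gamma}\mapsto f(\boldsymbol{\gamma},\mathbf{H})$ is continuously differentiable for each fixed $\mathbf{H}$, with $\partial f/\partial\gamma_{p}=\mathrm{Tr}(2\gamma_{p}\mathbf{K}_{p}\mathbf{H}\mathbf{H}^{\top})$ depending continuously on both arguments.

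Under these conditions the theorem guarantees that $\mathcal{J}$ admits directional derivatives, given by maximizing $\partial f/\partial\boldsymbol{\gamma}$ over the solution set $\mathcal{H}^{\ast}(\boldsymbol{\gamma})=\arg\max_{\mathbf{H}\in\mathcal{S}}f(\boldsymbol{\gamma},\mathbf{H})$. The step I expect to be the main obstacle is upgrading this to genuine differentiability, because the maximizer is \emph{not} unique: the maximum of $\mathrm{Tr}(\mathbf{K}_{\boldsymbol{\gamma}}\mathbf{H}\mathbf{H}^{\top})$ over $\mathcal{S}$ equals the sum of the $k$ largest eigenvalues of $\mathbf{K}_{\boldsymbol{\gamma}}$ (Ky Fan) and is attained by any $\mathbf{H}^{\ast}$ whose columns span the corresponding top-$k$ eigenspace, so $\mathbf{H}^{\ast}$ is determined only up to right-multiplication by an orthogonal matrix $\mathbf{R}\in O(k)$. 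The naive form of Danskin's theorem requires a unique optimizer, so this ambiguity must be dealt with before the envelope formula applies.

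The key observation that resolves this is that the gradient expression depends on $\mathbf{H}^{\ast}$ only through the orthogonal projector $\mathbf{P}^{\ast}=\mathbf{H}^{\ast}{\mathbf{H}^{\ast}}^{\top}$ onto the top-$k$ eigenspace: for any $\mathbf{R}\in O(k)$ one has $(\mathbf{H}^{\ast}\mathbf{R})(\mathbf{H}^{\ast}\mathbf{R})^{\top}=\mathbf{H}^{\ast}{\mathbf{H}^{\ast}}^{\top}$, so $\mathrm{Tr}(\mathbf{K}_{p}\mathbf{H}^{\ast}{\mathbf{H}^{\ast}}^{\top})$ is invariant across the entire solution set $\mathcal{H}^{\ast}(\boldsymbol{\gamma})$. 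Provided a spectral gap $\lambda_{k}(\mathbf{K}_{\boldsymbol{\gamma}})>\lambda_{k+1}(\mathbf{K}_{\boldsymbol{\gamma}})$ holds, the projector $\mathbf{P}^{\ast}$ is itself unique, so $\partial f/\partial\boldsymbol{\gamma}$ takes a single value over $\mathcal{H}^{\ast}(\boldsymbol{\gamma})$; consequently the directional derivative of $\mathcal{J}$ is linear in the direction, which is exactly the statement that $\mathcal{J}$ is differentiable at $\boldsymbol{\gamma}$.

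Finally I would assemble the formula by the chain rule. Since $\partial\mathbf{K}_{\boldsymbol{\gamma}}/\partial\gamma_{p}=2\gamma_{p}\mathbf{K}_{p}$, the envelope theorem yields $\partial\mathcal{J}(\boldsymbol{\gamma})/\partial\gamma_{p}=\mathrm{Tr}\!\left((\partial\mathbf{K}_{\boldsymbol{\gamma}}/\partial\gamma_{p})\,\mathbf{P}^{\ast}\right)=2\gamma_{p}\mathrm{Tr}(\mathbf{K}_{p}\mathbf{H}^{\ast}{\mathbf{H}^{\ast}}^{\top})$, matching the claimed expression. The only remaining gap to address is the degenerate case $\lambda_{k}(\mathbf{K}_{\boldsymbol{\gamma}})=\lambda_{k+1}(\mathbf{K}_{\boldsymbol{\gamma}})$, where $\mathbf{P}^{\ast}$ is non-unique; here I would either note that this occurs only on a lower-dimensional subset of $\Delta$ or fall back on the directional-derivative form of \cite{BonnansS98}, which still validates the gradient used by the subsequent reduced gradient descent algorithm.
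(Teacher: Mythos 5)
Your proposal follows essentially the same route as the paper: both invoke Theorem 4.1 of Bonnans and Shapiro, treat $\mathcal{J}(\boldsymbol{\gamma})$ as the optimal value of a parametric maximization over the compact Stiefel manifold, and resolve the non-uniqueness of the maximizer by observing that $\mathbf{H}^{\ast}{\mathbf{H}^{\ast}}^{\top}$ is invariant under right-multiplication of $\mathbf{H}^{\ast}$ by any $k\times k$ orthogonal matrix, so the envelope formula yields the single well-defined gradient $2\gamma_{p}\mathrm{Tr}\left(\mathbf{K}_{p}\mathbf{H}^{\ast}{\mathbf{H}^{\ast}}^{\top}\right)$. If anything, you are more careful than the paper, whose proof asserts uniqueness of the maximizer up to rotation without qualification: your explicit spectral-gap condition $\lambda_{k}(\mathbf{K}_{\boldsymbol{\gamma}})>\lambda_{k+1}(\mathbf{K}_{\boldsymbol{\gamma}})$ is exactly what that assertion silently requires, and your fallback to directional derivatives in the degenerate case addresses a point the paper simply ignores.
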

\begin{proof}
For any given $\boldsymbol{\gamma}\in\Delta$, the maximum of optimization problem $\max_{\mathbf{H}}\;\mathrm{Tr}\left(\mathbf{K}_{\boldsymbol{\gamma}}\mathbf{H}\mathbf{H}^{\top}\right)\;s.t.\;\mathbf{H}^{\top}\mathbf{H}=\mathbf{I}_{k}$ is uniqe, with $\tilde{\mathbf{H}}^{\ast}\in\{\tilde{\mathbf{H}}^{\ast}|\tilde{\mathbf{H}}^{\ast}=\mathbf{H}^{\ast}\mathbf{U},\,\mathbf{U}\mathbf{U}^{\top}=\mathbf{U}^{\top}\mathbf{U}=\mathbf{I}_{k}\}$ the corresponding maximizer. According to Theorem 4.1 in \cite{BonnansS98}, $\mathcal{J}(\boldsymbol{\gamma})$ in Eq. (\ref{eq:MKKM_MinMax2}) is differentiable, and $\frac{\partial \mathcal{J}(\boldsymbol{\gamma})}{\partial \gamma_{p}}=2\gamma_{p}\mathrm{Tr}(\mathbf{K}_{p}\tilde{\mathbf{H}}^{\ast}({\tilde{\mathbf{H}}^{\ast}})^{\top})=2\gamma_{p}\mathrm{Tr}(\mathbf{K}_{p}{\mathbf{H}}^{\ast}{\mathbf{H}^{\ast}}^{\top})$.
\end{proof}

\subsection{The Optimization Algorithm}

We propose to solve the optimization in Eq. (\ref{eq:MKKM_MinMax1}) with reduced gradient descent algorithms. We firstly calculate the gradient of $\mathcal{J}(\boldsymbol{\gamma})$ according to Theorem \ref{theorem1}, and then update $\boldsymbol{\gamma}$ with a descent direction by which the equality  and  non-negativity constraints on $\boldsymbol{\gamma}$ can be guaranteed.

To fulfill this goal, we firstly handle the equality constraint by computing the reduced gradient by following \citet{Rakotomamonjy08}. Let $\gamma_{u}$ be a non-zero component of $\boldsymbol{\gamma}$ and $\bigtriangledown\mathcal{J}(\boldsymbol{\gamma})$ denote the reduced gradient of $\mathcal{J}(\boldsymbol{\gamma})$. The $p$-th $(1\leq p\leq m)$ element of $\bigtriangledown\mathcal{J}(\boldsymbol{\gamma})$ is
\begin{equation}\label{eq:ReduceSubGradientP}
\small{
\left[\bigtriangledown\mathcal{J}(\boldsymbol{\gamma})\right]_{p}=\frac{\partial \mathcal{J}(\boldsymbol{\gamma})}{\partial \gamma_{p}}-\frac{\partial \mathcal{J}(\boldsymbol{\gamma})}{\partial \gamma_{u}}\;\;\forall\; p\neq u,
}
\end{equation}
and
\begin{equation}\label{eq:ReduceSubGradientU}
\small{
\left[\bigtriangledown\mathcal{J}(\boldsymbol{\gamma})\right]_{u}=\sum\nolimits_{p=1,p\neq u}^{m}\left(
\frac{\partial \mathcal{J}(\boldsymbol{\gamma})}{\partial \gamma_{u}}-\frac{\partial \mathcal{J}(\boldsymbol{\gamma})}{\partial \gamma_{p}}\right)
}
\end{equation}
Following the suggestion in  \citet{Rakotomamonjy08}, we choose $u$ to be the index of the largest component of vector $\boldsymbol{\gamma}$ which is considered to provide better numerical stability.

We then take the positivity constraints on $\boldsymbol{\gamma}$ into consideration in the descent direction. Note that $-\bigtriangledown\mathcal{J}(\boldsymbol{\gamma})$ is a descent direction since our aim is to minimize $\mathcal{J}(\boldsymbol{\gamma})$. However, directly using this direction would violate the positivity constraints in the case that if there is an index $p$ such that $\gamma_{p}=0$ and $\left[\bigtriangledown\mathcal{J}(\boldsymbol{\gamma})\right]_{p}>0$. In such case, the descent direction for that component should be set to $0$. This gives the descent direction for updating $\boldsymbol{\gamma}$ as
\begin{equation}\label{eq:DescentDirection}
\small{
	d_{p} =
	\begin{cases}
	0 & \text{if}\; \gamma_{p}=0 \; \text{and} \;\left[\bigtriangledown\mathcal{J}(\boldsymbol{\gamma})\right]_{p}>0\\
	-\left[\bigtriangledown\mathcal{J}(\boldsymbol{\gamma})\right]_{p} & \text{if} \; \gamma_{p}>0 \; \text{and} \; p\neq u\\
	- \left[\bigtriangledown\mathcal{J}(\boldsymbol{\gamma})\right]_{u} & \text{if}  \; p= u.
	\end{cases}
	}
\end{equation}
After a descent direction $\mathbf{d}=[d_{1},\cdots,d_{m}]^{\top}$ is computed by Eq. (\ref{eq:DescentDirection}), $\boldsymbol{\gamma}$ can be calculated via the updating scheme $\boldsymbol{\gamma}\leftarrow \boldsymbol{\gamma}+\alpha \mathbf{d}$, where $\alpha$ is the optimal step size. It can be selected by a one-dimensional line search strategy such as Armijo's rule.  The whole algorithm procedure solving the optimization problem in Eq.~(\ref{eq:MKKM_MinMax}) is outlined in Algorithm~\ref{SimpleMKKM}.

\begin{algorithm}[tb]
   \caption{SimpleMKKM}\label{SimpleMKKM}
\begin{algorithmic}[1]
   \STATE {\bfseries Input:} $\{\mathbf{K}_{p}\}_{p=1}^{m},\;k,\,t=1$.
   \STATE Initialize $\boldsymbol{\gamma}^{(1)} = \mathbf{1}/m,\,flag=1$.
   \WHILE{flag}
   \STATE compute $\mathbf{H}$ by solving a kernel k-means with $\mathbf{K}_{\boldsymbol{\gamma}^{(t)}}=\sum_{p=1}^{m}\big(\gamma_{p}^{(t)}\big)^{2}\mathbf{K}_{p}$.
   \STATE compute $\frac{\partial \mathcal{J}(\boldsymbol{\gamma})}{\partial \gamma_{p}}\,(p=1,\cdots,m)$ and the descent direction $\mathbf{d}^{(t)}$ in Eq. (\ref{eq:DescentDirection}).
   \STATE update $\boldsymbol{\gamma}^{(t+1)}\leftarrow \boldsymbol{\gamma}^{(t)}+\alpha \mathbf{d}^{(t)}$.
   \IF{$\max{|\boldsymbol{\gamma}^{(t)}-\boldsymbol{\gamma}^{(t-1)}|\leq1e-4}$}
   \STATE flag=0.
   \ENDIF
   \STATE $ t \leftarrow t+1$.
   \ENDWHILE
\end{algorithmic}
\end{algorithm}

\subsection{Computational Complexity and Convergence}

We discuss the computational complexity of SimpleMKKM. From Algorithm \ref{SimpleMKKM}, at each iteration, SimpleMKKM needs to solve a kernel \kmeans{} problem, calculate the reduced gradient, and search optimal step size. Therefore, its computational complexity at each iteration is $\mathcal{O}(n^3+m*n^3+m*n_{0})$, where $n_{0}$ is the maximal number of operations required to find the optimal step size. As observed, SimpleMKKM does not significantly increase the computational complexity of existing MKKM algorithms, as also validated by the experimental results in Figure \ref{FigRunningTime}.

We then briefly discuss the convergence of SimpleMKKM. Note that Eq. (\ref{eq:MKKM_MinMax2}) is a traditional kernel \kmeans{} which has a  global optimum. Under this condition, the gradient computation in Theorem~\ref{theorem1} is exact, and our algorithm performs reduced gradient descent on a continuously differentiable function $\mathcal{J}(\boldsymbol{\gamma})$ defined on the simplex $\{\boldsymbol{\gamma}\in\mathbb{R}^{m}|\sum_{p=1}^{m}\gamma_{p}=1,\,\gamma_{p}\geq 0,\,\forall p\}$, which does converge to the minimum of $\mathcal{J}(\boldsymbol{\gamma})$ \cite{Rakotomamonjy08}. The quick convergence of SimpleMKKM is validated by the experimental results in Figure~\ref{FigObjective}.

We conclude this section by discussing the differences with MKKM-MM \cite{Bang2018robust}. Though both works share a min-max (max-min) framework, their differences can be summarized from the following three aspects: (1) The objectives are different. SimpleMKKM adopts the unsupervised kernel alignment criterion while MKKM-MM inherits the objective of MKKM, which can be clearly seen from Eq.~(\ref{eq:MKKM_MM}) and Eq.~(\ref{eq:MKKM_MinMax}). Further,
MKKM-MM applies the $\ell_{2}$-norm constraints on $\boldsymbol{\gamma}$ to avoid sparse solutions. However, although using the $\ell_{1}$-norm constraint, our SimpleMKKM  still obtains non-sparse solution, as shown by the results in Figure \ref{FigKernelWeights}. (2) More importantly, the optimization strategies are totally different. MKKM-MM follows the widely used alternating optimization paradigm to solve Eq.~(\ref{eq:MKKM_MM}). In contrast, we, for the first time, reformulate the MKKM as a minimization problem, and develop a reduced gradient descent algorithm to efficiently solve it. (3) The clustering performance is different. We empirically compare their clustering performance, and observe that SimpleMKKM consistently and significantly outperforms MKKM-MM on all 11 benchmark datasets, as shown in Table~\ref{ClusteringACC}.

\section{The Generalization Analysis}

Generalization error for \kmeans{} clustering has been studied by fixing the centroids obtained in the training process and computing their generalization to testing data  \cite{maurer2010k,liu2016dimensionality}. In this section, we study how the centroids obtained by the proposed SimpleMKKM generalizes onto test data by deriving its generalization bound.

We now define the error of SimpleMKKM. Let $\hat{\mathbf{C}} = [\hat{\mathbf{C}}_1,\cdots,\hat{\mathbf{C}}_k]$ be the learned matrix composed of the $k$ centroids and $\hat{\boldsymbol{\gamma}}$ the learned kernel weights by the proposed SimpleMKKM, where $\hat{\mathbf{C}}_v=\frac{1}{|\hat{\mathbf{C}}_v|}\sum_{j\in\hat{\mathbf{C}}_v}\phi_{\hat{\boldsymbol{\gamma}}}(\mathbf{x}_{j}),1\leq c\leq k$. By defining $\Theta=\{\mathbf{e}_1,\cdots,\mathbf{e}_k\}$, effective SimpleMKKM clustering should make the following error small
\begin{equation}\label{eq:Generalization1}
\small{
1-\mathbb{E}_{\mathbf{x}}\left[\max\nolimits_{\mathbf{y}\in\Theta}
\langle \phi_{\hat{\boldsymbol{\gamma}}}(\mathbf{x}),\hat{\mathbf{C}}\mathbf{y}\rangle_{\mathcal{H}^{k}}\right],
}
\end{equation}
where $\phi_{\hat{\boldsymbol{\gamma}}}(\mathbf{x})=
[\hat{\boldsymbol{\gamma}}_{1}\phi_{1}^{\top}(\mathbf{x}),\cdot,\hat{\boldsymbol{\gamma}}_{m}\phi_{1}^{\top}(\mathbf{x})]^{\top}$ is the learned feature map associated with the kernel function $K_{\hat{\boldsymbol{\gamma}}}(\cdot,\cdot)$ and  $\mathbf{e}_1,\cdots,\mathbf{e}_k$ form the orthogonal bases of $\mathbb{R}^k$. Intuitively, it says the expected alignment between test points and their closest centroid should be high. We show how the proposed algorithm achieves this goal.

Let us define a function class first:
\begin{equation}\label{eq:Generalization2}
\small{
\begin{split}
\mathcal{F}= &\Big\{f:\;\mathbf{x}\mapsto 1 - \max\nolimits_{\mathbf{y}\in\Theta}\langle \phi_{{\boldsymbol{\gamma}}}(\mathbf{x}),\mathbf{C}\mathbf{y}\rangle_{\mathcal{H}^{k}} {\Big|}\boldsymbol{\gamma}^\top\mathbf{1}_m=1,\\
&\qquad\gamma_p\geq 0, \mathbf{C}\in\mathcal{H}^k,\,|K_p(\mathbf{x},\tilde{\mathbf{x}})|\leq b,\,\forall p, \forall \mathbf{x}\in\mathcal{X}\Big\},
\end{split}
}
\end{equation}
where $\mathcal{H}^k$ stands for the multiple kernel Hilbert space.

\begin{theorem}\label{maintl}
For any $\delta>0$, with probability at least $1-\delta$, the following holds for all $f\in \mathcal{F}$:
\begin{equation}\label{eq:Generalization3}
\small{
\begin{split}
\mathbb{E}\left[f(\mathbf{x})\right]&\leq \frac{1}{n}\sum\nolimits_{i=1}^{n}f(\mathbf{x}_i)+\frac{\sqrt{\pi/2}bk}{\sqrt{n}} + (1+b)\sqrt{\frac{\log{1/\delta}}{2n}}.
\end{split}
}
\end{equation}
\end{theorem}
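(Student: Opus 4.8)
The plan is to read the bound \eqref{eq:Generalization3} as a one-sided uniform-deviation (Rademacher) inequality over the class $\mathcal{F}$, and to establish it by the classical three-step recipe: a bounded-difference concentration step, symmetrization, and a complexity estimate tailored to the max-over-centroids structure. Concretely, I would set $\Phi(\mathbf{x}_1,\dots,\mathbf{x}_n)=\sup_{f\in\mathcal{F}}\big(\mathbb{E}[f]-\tfrac1n\sum_{i}f(\mathbf{x}_i)\big)$ and first control its range. The decisive preliminary observation is that on the simplex one has $\sum_p\gamma_p^2\le(\sum_p\gamma_p)^2=1$, so $\|\phi_{\boldsymbol{\gamma}}(\mathbf{x})\|_{\mathcal{H}^k}^2=\sum_p\gamma_p^2K_p(\mathbf{x},\mathbf{x})\le b$ uniformly in $\boldsymbol{\gamma}\in\Delta$; since each centroid $\mathbf{C}_c$ is an average of such feature vectors, $\|\mathbf{C}_c\|\le\sqrt{b}$ as well. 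By Cauchy--Schwarz, $|\langle\phi_{\boldsymbol{\gamma}}(\mathbf{x}),\mathbf{C}_c\rangle|\le b$, so every $f\in\mathcal{F}$ ranges over an interval of length at most $1+b$. Replacing a single $\mathbf{x}_i$ therefore perturbs $\tfrac1n\sum_i f(\mathbf{x}_i)$, and hence $\Phi$, by at most $(1+b)/n$, and McDiarmid's inequality gives, with probability at least $1-\delta$, $\Phi\le\mathbb{E}[\Phi]+(1+b)\sqrt{\log(1/\delta)/(2n)}$. This already produces the confidence term in \eqref{eq:Generalization3}.

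Next I would bound $\mathbb{E}[\Phi]$. Standard symmetrization yields $\mathbb{E}[\Phi]\le 2\mathcal{R}_n(\mathcal{F})$, where $\mathcal{R}_n$ is the Rademacher complexity; the additive constant $1$ in the definition of $f$ contributes nothing, so it suffices to bound the complexity of $\mathbf{x}\mapsto\max_{1\le c\le k}\langle\phi_{\boldsymbol{\gamma}}(\mathbf{x}),\mathbf{C}_c\rangle$. I would then pass to the Gaussian complexity $G_n$ through the comparison $\mathcal{R}_n\le\sqrt{\pi/2}\,G_n$ --- this inequality, which stems from $\mathbb{E}|g|=\sqrt{2/\pi}$ for a standard Gaussian $g$, is exactly the source of the $\sqrt{\pi/2}$ factor in the middle term.

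It remains to bound $G_n$ of the max-over-clusters class. I would peel off the maximum using the subadditivity of the complexity under pointwise maxima, $G_n(\max_c\mathcal{G}_c)\le\sum_{c=1}^{k}G_n(\mathcal{G}_c)$, which yields the clean linear-in-$k$ dependence (rather than a $\sqrt{\log k}$ that a union/sub-Gaussian argument would incur). For a single-centroid class $\mathcal{G}_c=\{\mathbf{x}\mapsto\langle\phi_{\boldsymbol{\gamma}}(\mathbf{x}),\mathbf{C}_c\rangle\}$, Cauchy--Schwarz with $\|\mathbf{C}_c\|\le\sqrt{b}$ reduces it to $\tfrac{\sqrt{b}}{n}\,\mathbb{E}_g\big\|\sum_i g_i\phi_{\boldsymbol{\gamma}}(\mathbf{x}_i)\big\|$, and Jensen together with $\mathbb{E}_g\big\|\sum_i g_i\phi_{\boldsymbol{\gamma}}(\mathbf{x}_i)\big\|^2=\sum_i\|\phi_{\boldsymbol{\gamma}}(\mathbf{x}_i)\|^2\le nb$ bounds this by a constant multiple of $b/\sqrt{n}$. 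Collecting the factor $2$ from symmetrization, the $\sqrt{\pi/2}$ from the Gaussian comparison, and the $k$ from the cluster decomposition reproduces the term $\sqrt{\pi/2}\,bk/\sqrt{n}$.

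The hard part is this last complexity estimate, and the delicate point is the interaction between the supremum over the simplex variable $\boldsymbol{\gamma}$ and the Rademacher/Gaussian expectation. A naive treatment of the concatenated feature map $[\gamma_1\phi_1,\dots,\gamma_m\phi_m]$ would leak a factor depending on the number of kernels $m$ (either $\sqrt{m}$ from $\|\Phi(\mathbf{x})\|^2=\sum_pK_p(\mathbf{x},\mathbf{x})\le mb$ after reparametrizing, or $\sqrt{\log m}$ from a maximum over the $m$ kernels inside the expectation). The bound stays $m$-free only because the $\ell_1$ constraint forces $\sum_p\gamma_p^2\le1$ and hence the uniform feature-norm control $\|\phi_{\boldsymbol{\gamma}}(\mathbf{x})\|\le\sqrt{b}$ used above, which lets one treat the kernel weights as effectively fixed and collapse each per-centroid term to a single-kernel linear-functional complexity of order $b/\sqrt{n}$. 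Verifying that this uniform control genuinely survives the supremum over $\boldsymbol{\gamma}$ inside the expectation --- i.e. that $\mathbb{E}_g\sup_{\boldsymbol{\gamma}\in\Delta}\big\|\sum_i g_i\phi_{\boldsymbol{\gamma}}(\mathbf{x}_i)\big\|$ is still controlled by $\sqrt{nb}$ and does not acquire an $m$-dependence --- is the step I would scrutinize most carefully, since it is where the clean stated constant $\sqrt{\pi/2}\,bk$ is either secured or lost.
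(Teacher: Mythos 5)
Your overall architecture---McDiarmid for the deviation term, symmetrization, the Rademacher-to-Gaussian comparison $\mathcal{R}_n\le\sqrt{\pi/2}\,G_n$, peeling the max over the $k$ centroids, then Cauchy--Schwarz plus Jensen per centroid---is the same global Rademacher-complexity route the paper follows, and several intermediate observations are correct (in particular $\sum_p\gamma_p^2\le1$ on the simplex, hence $\|\phi_{\boldsymbol{\gamma}}(\mathbf{x})\|\le\sqrt{b}$, and the identification of $\sqrt{\pi/2}$ with the Gaussian comparison and of $k$ with the per-cluster decomposition). However, two steps do not survive scrutiny. First, your constants do not close: collecting your own factors gives $2\cdot\sqrt{\pi/2}\cdot k\cdot b/\sqrt{n}=\sqrt{2\pi}\,bk/\sqrt{n}$, which is \emph{twice} the stated term $\sqrt{\pi/2}\,bk/\sqrt{n}$; the closing claim that this ``reproduces'' the theorem's constant is an arithmetic slip, so as written you prove only a weaker bound. (A smaller issue of the same kind: Cauchy--Schwarz gives $|\langle\phi_{\boldsymbol{\gamma}}(\mathbf{x}),\mathbf{C}_c\rangle|\le b$ and hence a range of $2b$, not $1+b$; your range claim, and the stated deviation constant, are recovered only when $b\le1$.)

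Second, and more seriously, the step you explicitly defer---``verifying that this uniform control genuinely survives the supremum over $\boldsymbol{\gamma}$''---is not merely delicate: it fails. After the Cauchy--Schwarz step the per-centroid complexity is exactly $\frac{\sqrt{b}}{n}\,\mathbb{E}_g\sup_{\boldsymbol{\gamma}\in\Delta}\bigl\|\sum_i g_i\phi_{\boldsymbol{\gamma}}(\mathbf{x}_i)\bigr\|$, and since $\bigl\|\sum_i g_i\phi_{\boldsymbol{\gamma}}(\mathbf{x}_i)\bigr\|^2=\sum_p\gamma_p^2\bigl\|\sum_i g_i\phi_p(\mathbf{x}_i)\bigr\|^2$ is a convex function of $\boldsymbol{\gamma}$, the supremum over the simplex is attained at a vertex and equals $\max_p\bigl\|\sum_i g_i\phi_p(\mathbf{x}_i)\bigr\|$. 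Pointwise-in-$\boldsymbol{\gamma}$ norm control says nothing about this maximum, and it can genuinely grow with $m$: take $\mathbf{K}_p=b\,u_pu_p^{\top}$ with $u_p$ the ($\pm1$-valued, mutually orthogonal) rows of a Hadamard matrix, so that $K_p(\mathbf{x}_i,\mathbf{x}_i)=b$; then $\bigl\|\sum_i g_i\phi_p(\mathbf{x}_i)\bigr\|=\sqrt{b}\,|u_p^{\top}g|$ with $u_p^{\top}g$ i.i.d.\ $N(0,n)$ across $p$, so $\mathbb{E}\max_p\bigl\|\sum_i g_i\phi_p(\mathbf{x}_i)\bigr\|\asymp\sqrt{2bn\log m}$, which exceeds your target $\sqrt{nb}$ already for small $m$. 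Consequently no argument that treats the kernel weights as ``effectively fixed'' can deliver the $m$-free term $\sqrt{\pi/2}\,bk/\sqrt{n}$; the coupling between $\boldsymbol{\gamma}$ and $\mathbf{C}$ has to be handled structurally (for instance by absorbing the weights into the centroids, $\tilde{\mathbf{C}}_c^{(p)}=\gamma_p\mathbf{C}_c^{(p)}$, and placing the norm constraint on $\tilde{\mathbf{C}}_c$, or by a Slepian-type comparison applied jointly to $(\boldsymbol{\gamma},\mathbf{C})$). In short, the step you flag as the one to ``scrutinize most carefully'' is precisely where the proof lives, and your outline stops before doing that work---so this is a proof plan with a genuine gap at its decisive point, not a proof.
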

The detailed proof is provided in the appendix due to space limit.

According to Theorem \ref{maintl}, for any learned $\hat{\boldsymbol{\gamma}}$ and $\hat{\mathbf{C}}$, to achieve a small
\begin{equation}\label{eq:Generalization6}
\small{
\begin{split}
\mathbb{E}_{\mathbf{x}}[f(\mathbf{x})] = 1 - \mathbb{E}_{\mathbf{x}}\left[\max\nolimits_{\mathbf{y}\in\Theta}\left\langle \phi_{\hat{\boldsymbol{\gamma}}}(\mathbf{x}),\hat{\mathbf{C}}\mathbf{y}\right\rangle_{\mathcal{H}^{k}}\right],
\end{split}
}
\end{equation}
the corresponding $\frac{1}{n}\sum_{i=1}^{n}f(\mathbf{x}_i)$ needs to be as small as possible. Assume that ${\boldsymbol{\gamma}}$ and ${\mathbf{C}}$ are obtained by minimizing $\frac{1}{n}\sum_{i}^{n}f(\mathbf{x}_i)$ and that ${\mathbf{H}}$ is constrained to be orthogonal, we have
\begin{equation}\label{eq:Generalization7}
\small{
\begin{split}
\frac{1}{n}\sum\nolimits_{i=1}^{n}f(\mathbf{x}_i) &\leq 1 - \frac{1}{n}\mathrm{Tr}(\mathbf{K}_{\boldsymbol{\gamma}}\mathbf{H}\mathbf{H}^\top)\\
\end{split}
}
\end{equation}
because the proposed algorithm poses a constraint $\mathbf{H}^\top\mathbf{H}=\mathbf{I}_k$ which will make the corresponding centroids non-optimal for minimizing $\frac{1}{n}\sum\nolimits_{i=1}^{n}f(\mathbf{x}_i)$.
This means that $1 - \frac{1}{n}\mathrm{Tr}(\mathbf{K}_{\boldsymbol{\gamma}}\mathbf{H}\mathbf{H}^\top)$ is an upper bound of $\frac{1}{n}\sum\nolimits_{i=1}^{n}f(\mathbf{x}_i)$. To minimize the upper bound, we may have to maximize over $\boldsymbol{\gamma}$ and $\mathbf{H}$, leading to $\max_{\boldsymbol{\gamma}}\max_{\mathbf{H}}\mathrm{Tr}(\mathbf{K}_{\boldsymbol{\gamma}}\mathbf{H}\mathbf{H}^\top)$. However, it is  intractable to find a good solution to $\boldsymbol{\gamma}$ and $\mathbf{H}$ under this criterion, and it is prone to over-fitted solutions \cite{Bang2018robust}. Instead, we take one of its lower bounds, $\min_{\boldsymbol{\gamma}}\max_{\mathbf{H}}\mathrm{Tr}(\mathbf{K}_{\boldsymbol{\gamma}}\mathbf{H}\mathbf{H}^\top)$ as the the objective of SimpleMKKM in Eq. (\ref{eq:MKKM_MinMax}). 
This analysis verifies the good generalization ability of the proposed SimpleMKKM.

\section{Experimental Results}

In this section, we conduct a comprehensive experimental study to evaluate the proposed SimpleMKKM in terms of clustering performance, the learned kernel weights, the running time, and convergence.

\subsection{Experimental Settings}

A number of standard MKKM benchmark datasets are adopted to evaluate  SimpleMKKM, including \emph{Flo17}\footnote{\url{www.robots.ox.ac.uk/~vgg/data/flowers/17/}}, \emph{Flo102}\footnote{\url{www.robots.ox.ac.uk/~vgg/data/flowers/102/}}, \emph{PFold}\footnote{\url{mkl.ucsd.edu/dataset/protein-fold-prediction}}, \emph{CCV}\footnote{\url{www.ee.columbia.edu/ln/dvmm/CCV/}}, \emph{Digit}\footnote{\url{http://ss.sysu.edu.cn/py/}}, \emph{Cal}\footnote{\url{www.vision.caltech.edu/Image_Datasets/Caltech101/}}.
Meanwhile, six sub-datasets, i.e. \emph{Cal-5}, \emph{Cal-10}, \emph{Cal-15}, \emph{Cal-20}, \emph{Cal-25} and \emph{Cal-30}, are constructed via selecting the first $5$, $10$, $15$, $20$, $25$ and $30$ classes respectively from the \emph{Cal} data. Their details are shown in Table~\ref{tab:datasets}.  It can be observed that the number of samples, kernels and categories of these datasets shows considerable variation, providing a good platform to compare the performance of different clustering algorithms.

\begin{table}[t]
\vspace{-6pt}
\linespread{1.0}
\centering
\caption{Specification of  our 11 benchmark datasets.}\label{tab:datasets}
\footnotesize
\begin{tabular}{lrrr} \hline
\multirow{2}{*}{Dataset} & \multicolumn{3}{c}{Number of} \\ 
 & \multicolumn{1}{c}{Samples} & \multicolumn{1}{c}{Kernels} & \multicolumn{1}{c}{Clusters} \\ \hline
Flo17 & 1360 & 7 & 17 \\
Flo102 & 8189 & 4 & 102 \\
PFold & 694 & 12 & 27 \\
CCV & 6773 & 3 & 20 \\
Digit & 2000 & 3 & 10 \\
Cal-5 & 75 & 25 & 5 \\
Cal-10 & 150 & 25 & 10 \\
Cal-15 & 225 & 25 & 15 \\
Cal-20 & 300 & 25 & 20 \\
Cal-25 & 375 & 25 & 25 \\
Cal-30 & 450 & 25 & 30 \\ \hline
\end{tabular}
\vspace{-8pt}
\end{table}

\begin{table*}[!htbp]
\centering
\vspace{-29pt}
\caption{Empirical evaluation and comparison of SimpleMKKM with eight baseline methods on eleven datasets in terms of clustering accuracy (ACC), normulaized mutual information (NMI) and Purity.
Boldface means no statistical difference from the best one.}\label{ClusteringACC}
\vspace{-15pt}
\begin{center}
\begin{small}
\begin{sc}
\resizebox{1.0\linewidth}{!}{
\begin{tabular}{lccccccccc}
\toprule
\scriptsize{Dataset}        & \scriptsize{Avg-KKM}        & \scriptsize{MKKM}          & \scriptsize{LMKKM}         & \scriptsize{ONKC}          & \scriptsize{MKKM-MiR}       & \scriptsize{LKAM}          & \scriptsize{LF-MVC}        &  \scriptsize{MKKM-MM} & \scriptsize{SimpleMKKM}\\
\toprule
\multicolumn{10}{c}{ACC}\\
\midrule
Flo17       & 51.3$\pm$ 1.4  & 43.9$\pm$ 1.7 & 42.7$\pm$ 1.7 & 43.4$\pm$ 2.0 &  \textbf{57.7$\pm$ 1.2} & 48.9$\pm$ 0.9 & 56.7$\pm$ 1.5 & 48.5$\pm$1.9 & \textbf{58.9$\pm$ 1.3} \\
Flo102      & 26.8$\pm$ 0.8  & 22.5$\pm$ 0.5 &     -     & 39.2$\pm$ 0.8 &  39.0$\pm$ 1.2 & 40.4$\pm$ 0.9 & 29.2$\pm$ 0.9 & 24.1$\pm$0.6 & \textbf{42.7$\pm$ 1.2} \\
PFold    & 29.1$\pm$ 1.4  & 27.1$\pm$ 1.0 & 22.4$\pm$ 0.7 & \textbf{35.4$\pm$ 1.5} &  \textbf{34.3$\pm$ 1.6} & \textbf{34.2$\pm$ 1.6} & 32.0$\pm$ 1.6 & 29.0$\pm$1.5 & \textbf{34.4$\pm$ 1.9} \\
CCV            & 19.6$\pm$ 0.6  & 18.0$\pm$ 0.5 & 18.6$\pm$ 0.2 & {22.1$\pm$ 0.6} &  20.8$\pm$ 0.8 & 19.0$\pm$ 0.3 & \textbf{23.1$\pm$ 0.5} & 18.8$\pm$0.7 & {22.1$\pm$ 0.7} \\
Digit    & 88.8$\pm$ 0.1  & 47.2$\pm$ 0.6 & 47.2$\pm$ 0.7 & 89.5$\pm$ 0.1 &  87.4$\pm$ 0.1 & \textbf{95.0$\pm$ 0.3} & 89.2$\pm$ 0.1 & 47.3$\pm$0.7 & 90.3$\pm$ 0.1 \\
Cal-5   & \textbf{35.9$\pm$ 1.1}  & 27.5$\pm$ 0.8 & 28.1$\pm$ 0.7 & 35.1$\pm$ 1.3 &  35.7$\pm$ 1.1 & 33.8$\pm$ 1.0 & \textbf{37.1$\pm$ 1.3} & $33.9\pm1.3$ & \textbf{36.1$\pm$ 1.0} \\
Cal-10  & 31.8$\pm$ 0.9  & 22.1$\pm$ 0.5 & 21.4$\pm$ 0.7 & 30.9$\pm$ 0.9 &  32.5$\pm$ 1.1 & 30.3$\pm$ 0.8 & \textbf{33.8$\pm$ 1.2} & 30.1$\pm$0.9 & \textbf{33.4$\pm$ 1.2} \\
Cal-15  & 30.5$\pm$ 0.9  & 19.7$\pm$ 0.5 & 	  -     & 28.7$\pm$ 0.8 &  30.7$\pm$ 1.0 & 28.5$\pm$ 0.8 & \textbf{32.6$\pm$ 0.9} & 28.2$\pm$1.1 & \textbf{31.8$\pm$ 0.9} \\
Cal-20  & 29.5$\pm$ 1.1  & 18.3$\pm$ 0.5 &     -    & 27.7$\pm$ 0.9 &  29.4$\pm$ 1.1 & 27.4$\pm$ 0.6 & \textbf{31.7$\pm$ 0.9} & 27.0$\pm$0.9 & \textbf{31.4$\pm$ 0.9} \\
Cal-25  & 29.2$\pm$ 0.9  & 17.0$\pm$ 0.5 &     -     & 26.4$\pm$ 0.9 &  27.5$\pm$ 1.0 & 25.5$\pm$ 0.8 & \textbf{31.1$\pm$ 0.9} & 26.2$\pm$0.7 & {30.0$\pm$ 0.9} \\
Cal-30  & 28.5$\pm$ 0.8  & 16.5$\pm$ 0.4 &    -    & 25.5$\pm$ 0.8 &  27.1$\pm$ 1.0 & 24.4$\pm$ 0.6 & \textbf{31.0$\pm$ 0.9} & 25.4$\pm$0.8 & \textbf{30.7$\pm$ 0.8} \\
\toprule
\multicolumn{10}{c}{NMI}\\
\midrule
Flo17       & 49.9$\pm$ 0.9  & 44.6$\pm$ 1.3 & 43.8$\pm$ 1.1 & 42.9$\pm$ 1.3 &  56.1$\pm$ 0.7 & 48.1$\pm$ 0.6 & 54.6$\pm$ 1.0 & 47.4$\pm$1.3 & \textbf{57.3$\pm$ 0.8} \\
Flo102      & 45.9$\pm$ 0.4  & 42.7$\pm$ 0.2 &     -    & 55.7$\pm$ 0.4 &  55.8$\pm$ 0.6 & 55.8$\pm$ 0.4 & 47.5$\pm$ 0.4 & 43.8$\pm$0.3 & \textbf{58.7$\pm$ 0.5}\\
PFold    & 40.3$\pm$ 1.2  & 38.1$\pm$ 0.6 & 34.8$\pm$ 0.6 & \textbf{44.1$\pm$ 0.8} &  \textbf{43.2$\pm$ 1.1} & \textbf{43.7$\pm$ 1.0} & 42.0$\pm$ 1.2 & 39.8$\pm$0.9 & \textbf{44.2$\pm$ 1.2} \\
CCV            & 16.8$\pm$ 0.3  & 15.1$\pm$ 0.5 & 14.4$\pm$ 0.1 & 18.4$\pm$ 0.3 &  17.9$\pm$ 0.4 & 16.8$\pm$ 0.2 & \textbf{18.9$\pm$ 0.3} & 15.7$\pm$0.5 & 18.2$\pm$ 0.3 \\
Digit    & 80.7$\pm$ 0.2  & 48.7$\pm$ 0.7 & 48.7$\pm$ 0.6 & 81.7$\pm$ 0.1 &  79.5$\pm$ 0.1 & \textbf{89.4$\pm$ 0.1} & 81.2$\pm$ 0.2 & 48.7$\pm$0.6 & 83.3$\pm$ 0.1 \\
Cal-5   & \textbf{70.3$\pm$ 0.6}  & 65.9$\pm$ 0.4 & 66.5$\pm$ 0.3 & 69.8$\pm$ 0.6 &  \textbf{70.2$\pm$ 0.5} & 68.4$\pm$ 0.5 & \textbf{70.8$\pm$ 0.6} & 69.3$\pm$0.7 & \textbf{70.3$\pm$ 0.4} \\
Cal-10  & 61.8$\pm$ 0.5  & 55.4$\pm$ 0.4 & 55.2$\pm$ 0.3 & 61.0$\pm$ 0.5 &  62.1$\pm$ 0.5 & 60.5$\pm$ 0.6 & \textbf{62.9$\pm$ 0.6} & 60.6$\pm$0.5 & \textbf{62.6$\pm$ 0.6} \\
Calt-15  & 57.2$\pm$ 0.5  & 49.3$\pm$ 0.5 &     -     & 55.8$\pm$ 0.4 &  57.2$\pm$ 0.6 & 56.0$\pm$ 0.5 & \textbf{58.7$\pm$ 0.5} & 55.6$\pm$0.5 & \textbf{58.2$\pm$ 0.5} \\
Cal-20  & 54.2$\pm$ 0.6  & 45.4$\pm$ 0.3 &     -     & 52.7$\pm$ 0.5 &  54.0$\pm$ 0.5 & 52.6$\pm$ 0.4 & \textbf{55.8$\pm$ 0.6} & 52.3$\pm$0.5 & \textbf{55.5$\pm$ 0.5} \\
Cal-25  & 52.1$\pm$ 0.6  & 42.3$\pm$ 0.4 &     -     & 49.9$\pm$ 0.6 &  51.1$\pm$ 0.6 & 49.7$\pm$ 0.4 & \textbf{53.6$\pm$ 0.5} & 49.7$\pm$0.4 & 52.9$\pm$ 0.5 \\
Cal-30  & 50.0$\pm$ 0.6  & 40.1$\pm$ 0.3 &     -     & 47.6$\pm$ 0.5 &  49.0$\pm$ 0.5 & 47.4$\pm$ 0.4 & \textbf{52.1$\pm$ 0.5} & 47.7$\pm$0.4 & \textbf{51.8$\pm$ 0.5} \\
\toprule
\multicolumn{10}{c}{Purity}\\
\midrule
Flo17       & 52.3$\pm$ 1.2  & 45.3$\pm$ 1.5 & 44.6$\pm$ 1.5 & 45.1$\pm$ 1.8 &  59.2$\pm$ 1.1 & 50.1$\pm$ 0.6 & 57.5$\pm$ 1.6 & 49.5$\pm$1.8 & \textbf{60.2$\pm$ 1.4} \\
Flo102      & 32.2$\pm$ 0.6  & 27.8$\pm$ 0.4 &     -     & 45.1$\pm$ 0.8 &  45.1$\pm$ 1.0 & 46.7$\pm$ 0.6 & 34.6$\pm$ 0.6 & 29.3$\pm$0.5 & \textbf{48.7$\pm$ 0.8}\\
PFold    & 37.3$\pm$ 1.6  & 33.7$\pm$ 0.9 & 31.1$\pm$ 1.0 & \textbf{42.0$\pm$ 1.2} &  \textbf{41.2$\pm$ 1.4} & \textbf{41.6$\pm$ 1.3} & 38.7$\pm$ 1.4 & 36.3$\pm$1.1 & \textbf{41.4$\pm$ 1.6} \\
CCV            & 23.7$\pm$ 0.4  & 22.3$\pm$ 0.5 & 22.0$\pm$ 0.2 & 24.4$\pm$ 0.5 &  23.4$\pm$ 0.7 & 22.2$\pm$ 0.3 & \textbf{25.8$\pm$ 0.4} & 23.1$\pm$0.7 & 25.2$\pm$ 0.6 \\
Digit    & 88.8$\pm$ 0.1  & 50.1$\pm$ 0.7 & 50.0$\pm$ 0.7 & 89.5$\pm$ 0.1 &  87.4$\pm$ 0.1 & \textbf{95.0$\pm$ 0.3} & 89.2$\pm$ 0.1 & 50.0$\pm$0.8 & 90.3$\pm$ 0.1 \\
Cal-5   & 37.3$\pm$ 1.2  & 28.3$\pm$ 0.8 & 28.6$\pm$ 0.7 & 36.4$\pm$ 1.3 &  \textbf{37.4$\pm$ 1.1} & 35.6$\pm$ 1.1 & \textbf{38.6$\pm$ 1.2} & 35.2$\pm$1.4 & \textbf{37.5$\pm$ 1.2} \\
Cal-10  & 33.6$\pm$ 0.9  & 23.5$\pm$ 0.5 & 22.8$\pm$ 0.6 & 32.9$\pm$ 0.9 &  \textbf{34.7$\pm$ 1.0} & 32.2$\pm$ 0.8 & \textbf{35.8$\pm$ 1.2 }& 31.9$\pm$0.9 & \textbf{35.3$\pm$ 1.1} \\
Cal-15  & 32.3$\pm$ 0.8  & 21.2$\pm$ 0.5 &     -     & 30.6$\pm$ 0.8 &  32.5$\pm$ 1.0 & 30.1$\pm$ 0.7 & \textbf{34.4$\pm$ 0.9 }& 30.0$\pm$0.9 & \textbf{33.7$\pm$ 0.8} \\
Cal-20  & 31.5$\pm$ 1.0  & 19.9$\pm$ 0.5 &     -     & 29.7$\pm$ 0.7 &  31.4$\pm$ 1.0 & 29.2$\pm$ 0.7 & \textbf{33.7$\pm$ 0.9} & 28.9$\pm$0.8 & \textbf{33.4$\pm$ 0.7} \\
Cal-25  & 31.1$\pm$ 0.7  & 18.7$\pm$ 0.4 &     -    & 28.4$\pm$ 0.8 &  29.6$\pm$ 0.8 & 27.7$\pm$ 0.8 & \textbf{33.5$\pm$ 0.8} & 28.1$\pm$0.8 & 32.2$\pm$ 0.9 \\
Cal-30  & 30.5$\pm$ 0.8  & 18.0$\pm$ 0.4 &     -    & 27.4$\pm$ 0.7 &  29.1$\pm$ 0.9 & 26.4$\pm$ 0.5 & \textbf{33.0$\pm$ 0.9} & 27.3$\pm$0.7 & \textbf{32.7$\pm$ 0.7} \\
\bottomrule
\end{tabular}}
\end{sc}
\end{small}
\end{center}
\vspace{-25pt}
\end{table*}

For all data sets, the true number of clusters $k$ is assumed known and is set as the true number of classes. The widely used clustering accuracy (ACC), normalized mutual information (NMI) and purity are applied to evaluate the clustering performance.

\begin{figure*}[!htbp]
\vspace{-20pt}
\centering
\subfigure{\includegraphics[width=0.31\textwidth]{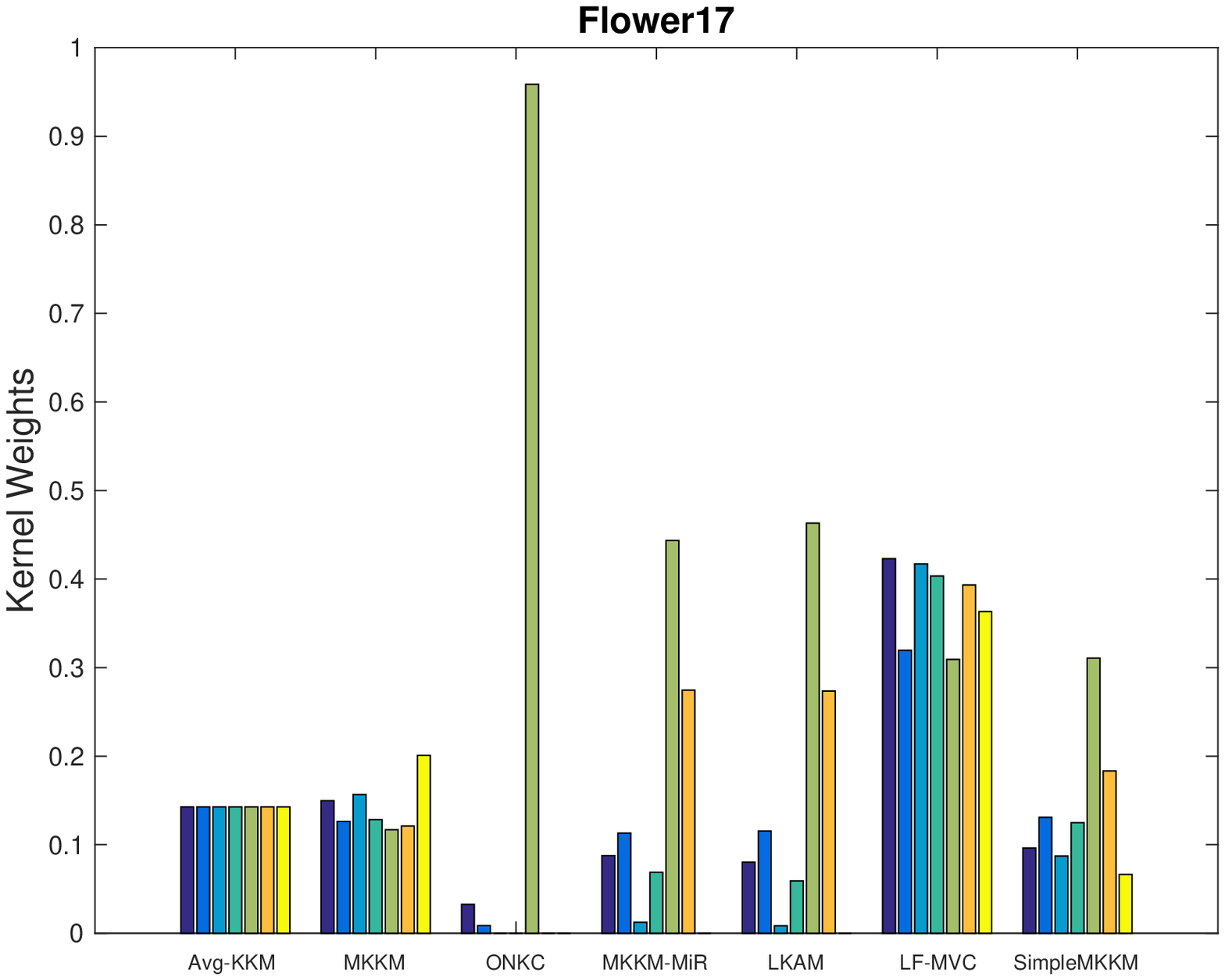}\label{KernelWeights_Flower17}}%
\subfigure{\includegraphics[width=0.31\textwidth]{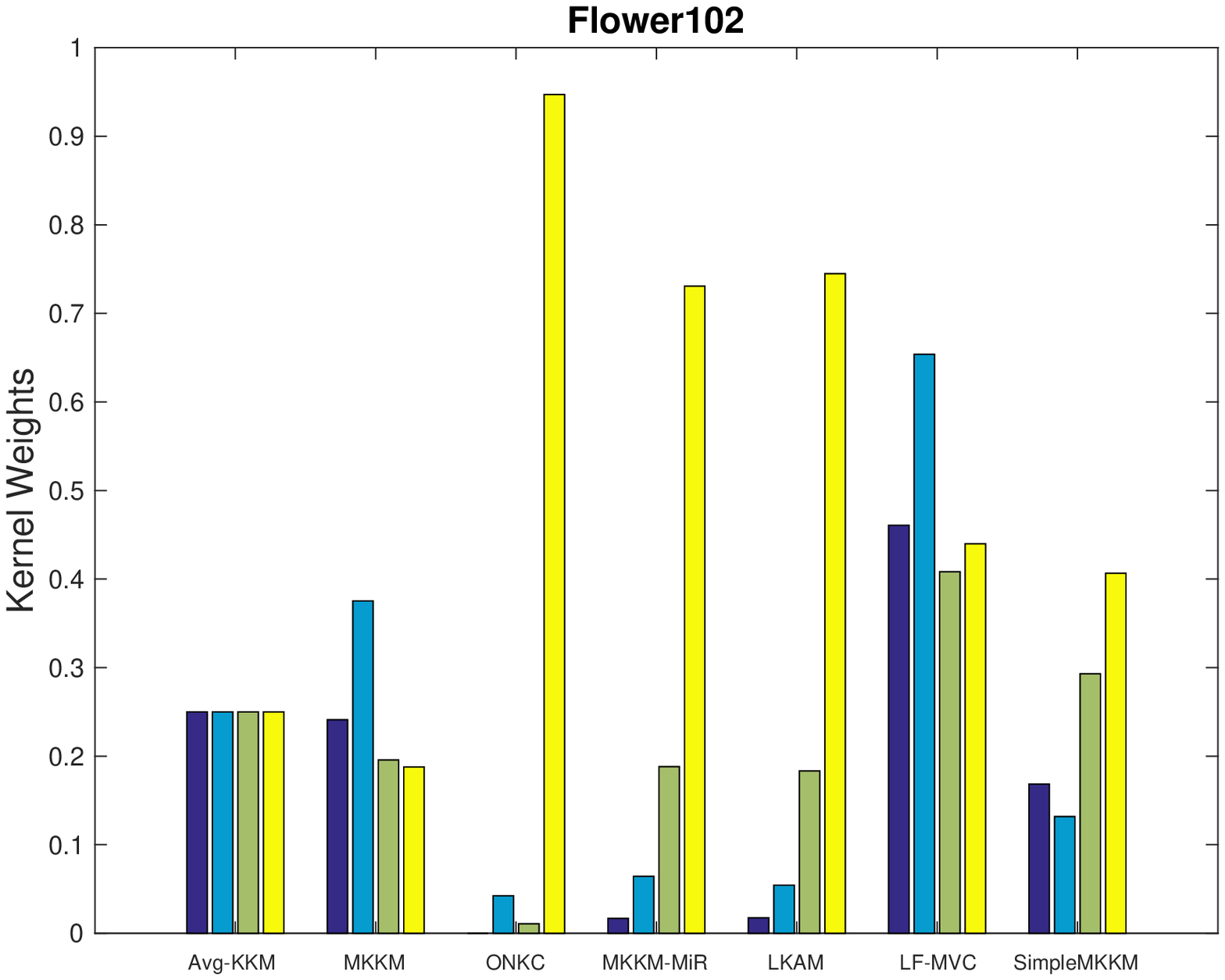}\label{KernelWeights_Flower102}}%
\subfigure{\includegraphics[width=0.31\textwidth]{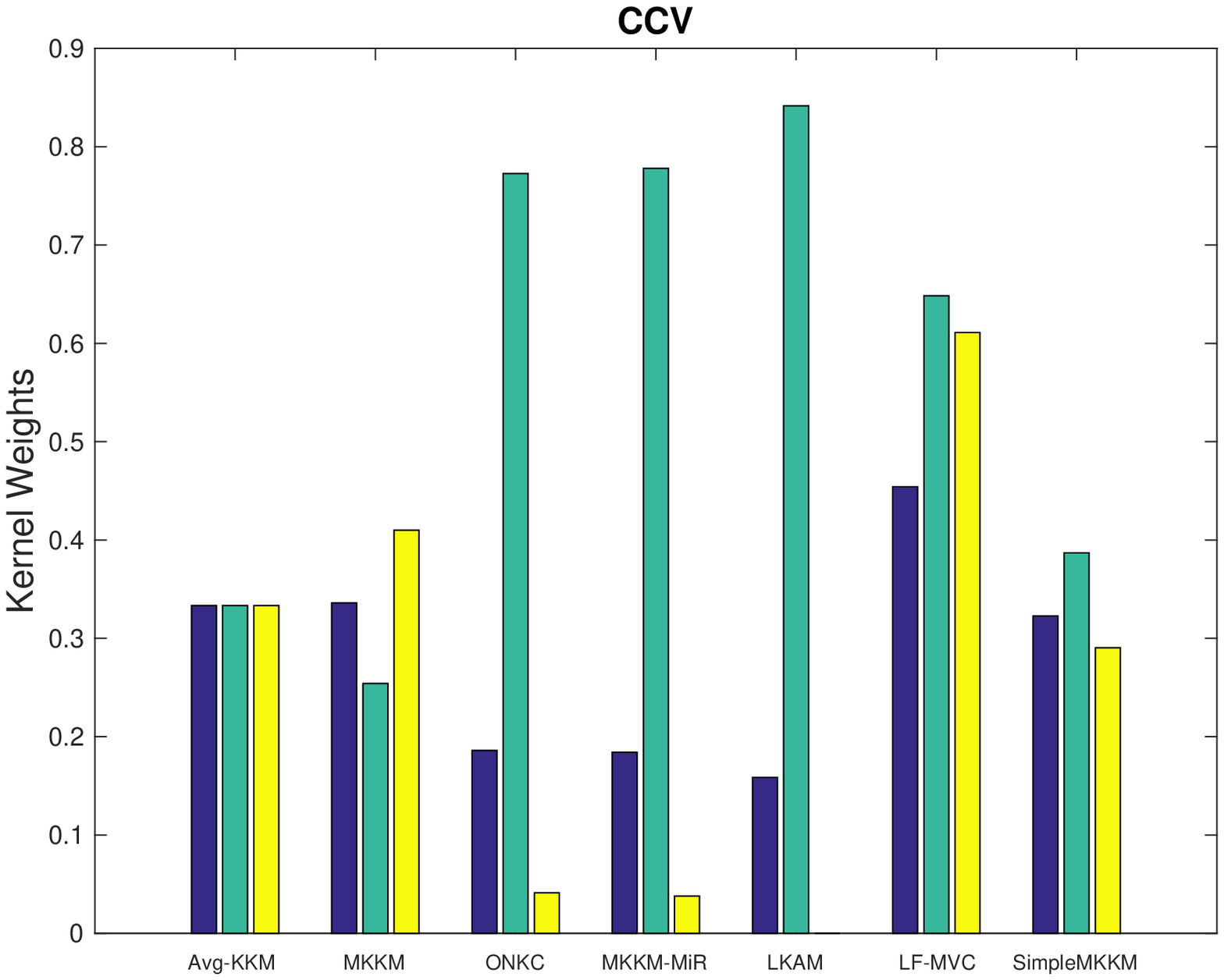}\label{KernelWeights_CCV}}\\[-15pt]
\subfigure{\includegraphics[width=0.31\textwidth]{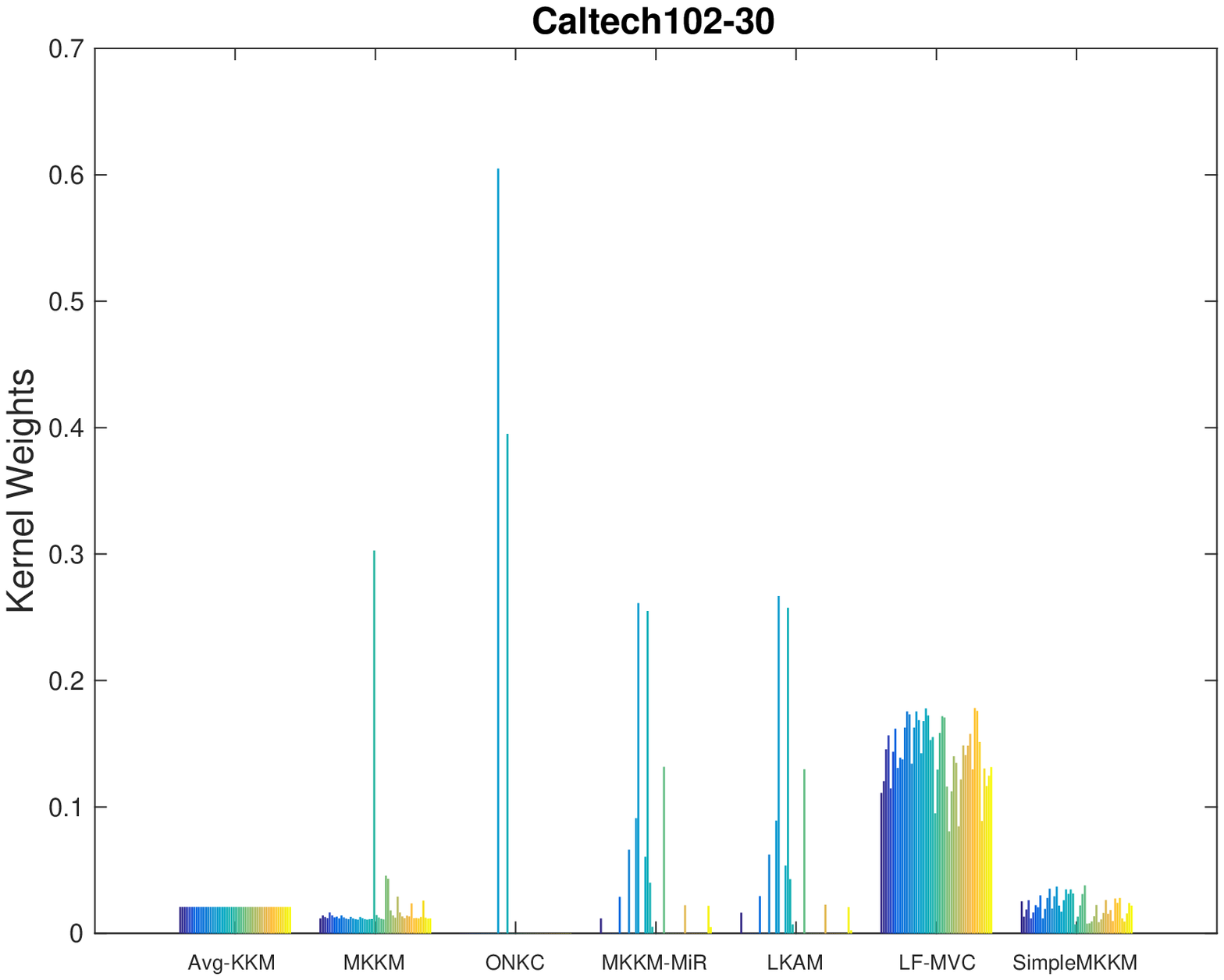}\label{KernelWeights_Caltech102-30}}%
\subfigure{\includegraphics[width=0.31\textwidth]{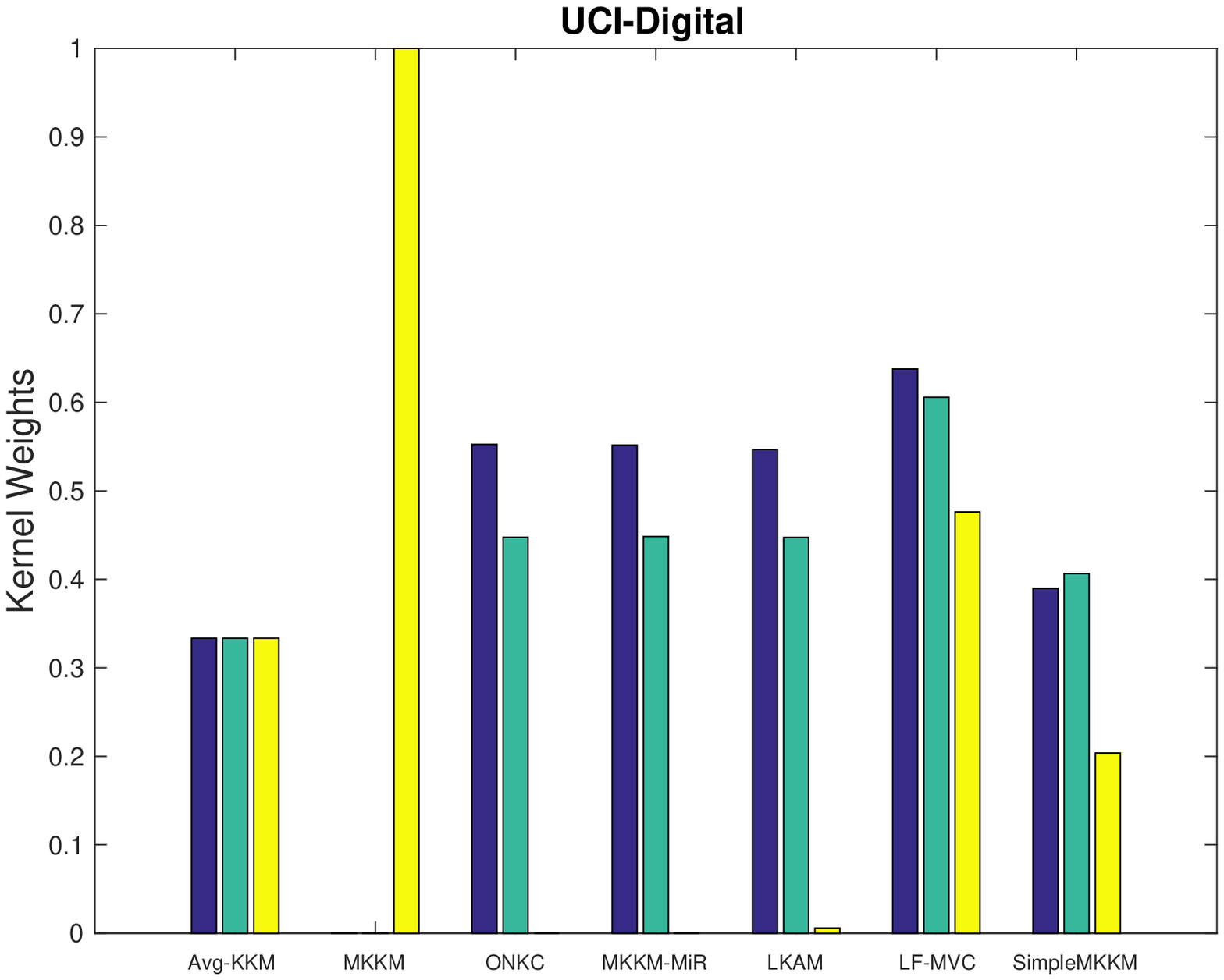}\label{KernelWeights_UCIdigital}}%
\subfigure{\includegraphics[width=0.31\textwidth]{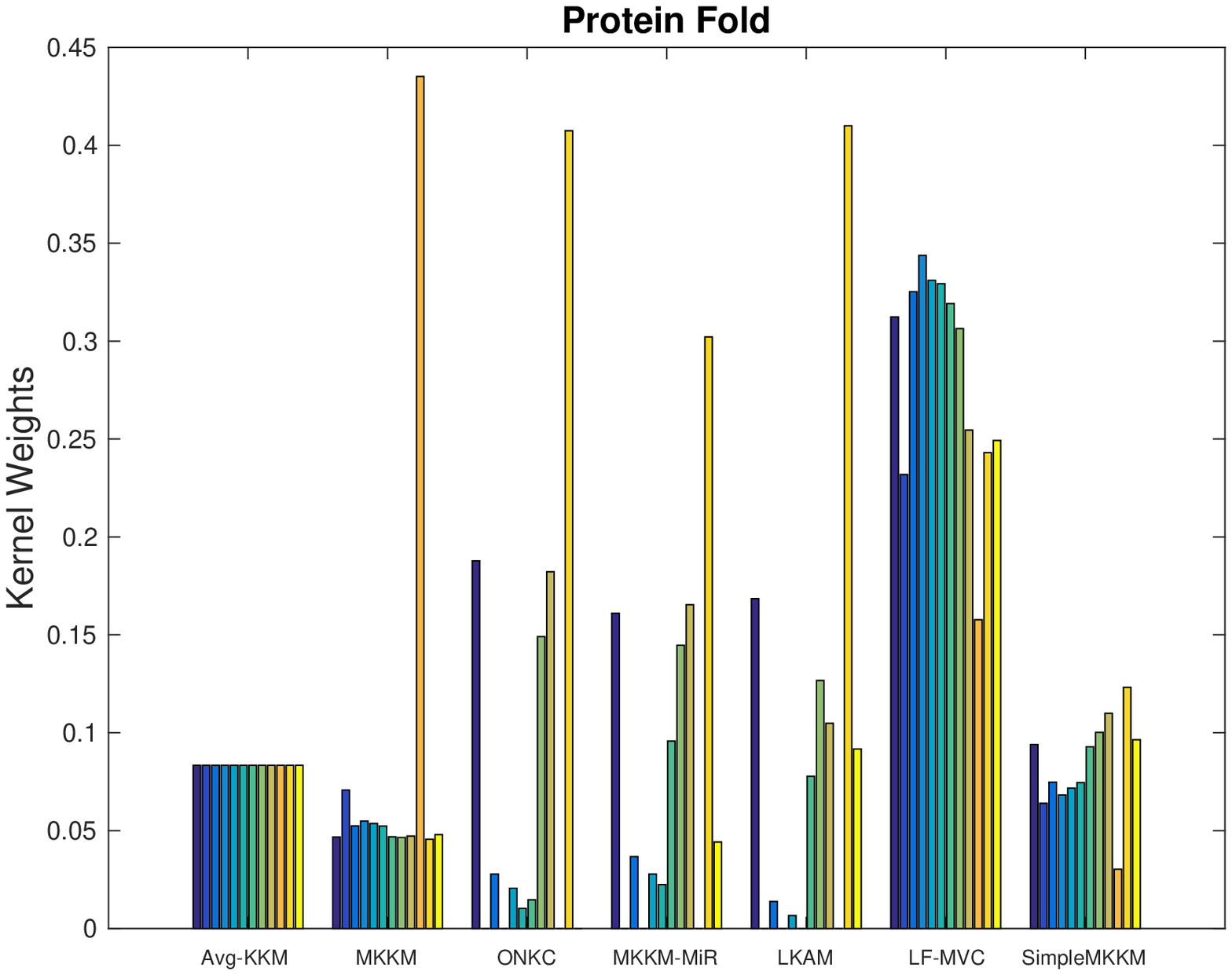}\label{KernelWeights_ProteinFold}}\\
\vspace{-20pt}
\caption{The kernel weights learned by different algorithms. SimpleMKKM maintains reduced sparsity compared to several competitors. Other datasets omitted due to space limit.}\label{FigKernelWeights}
\vspace{-30pt}
\end{figure*}

\begin{figure*}[!htbp]
\vspace{-14pt}
\centering
\subfigure{\includegraphics[width=0.33\textwidth]{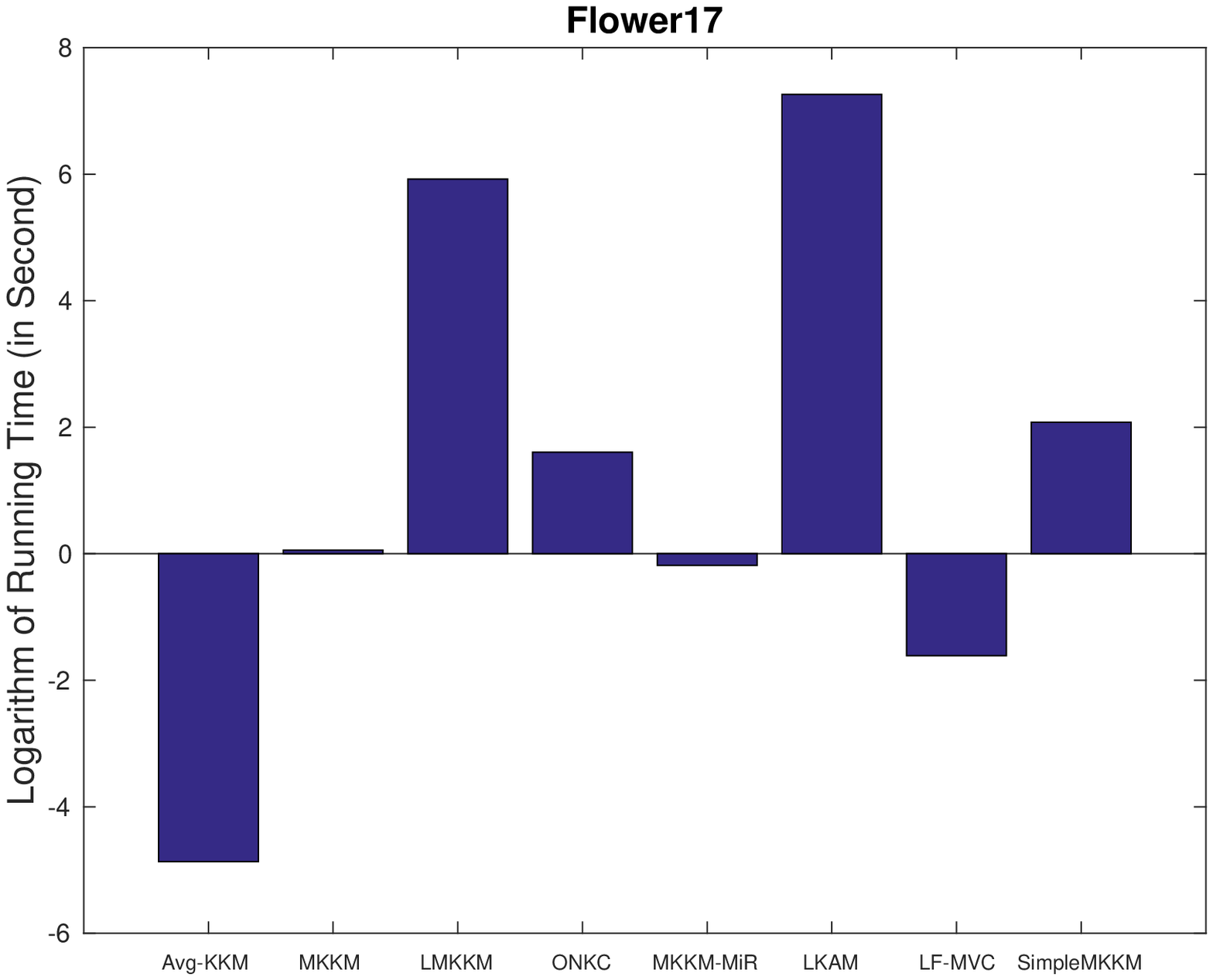}\label{RunningTime_Flower17}}%
\subfigure{\includegraphics[width=0.33\textwidth]{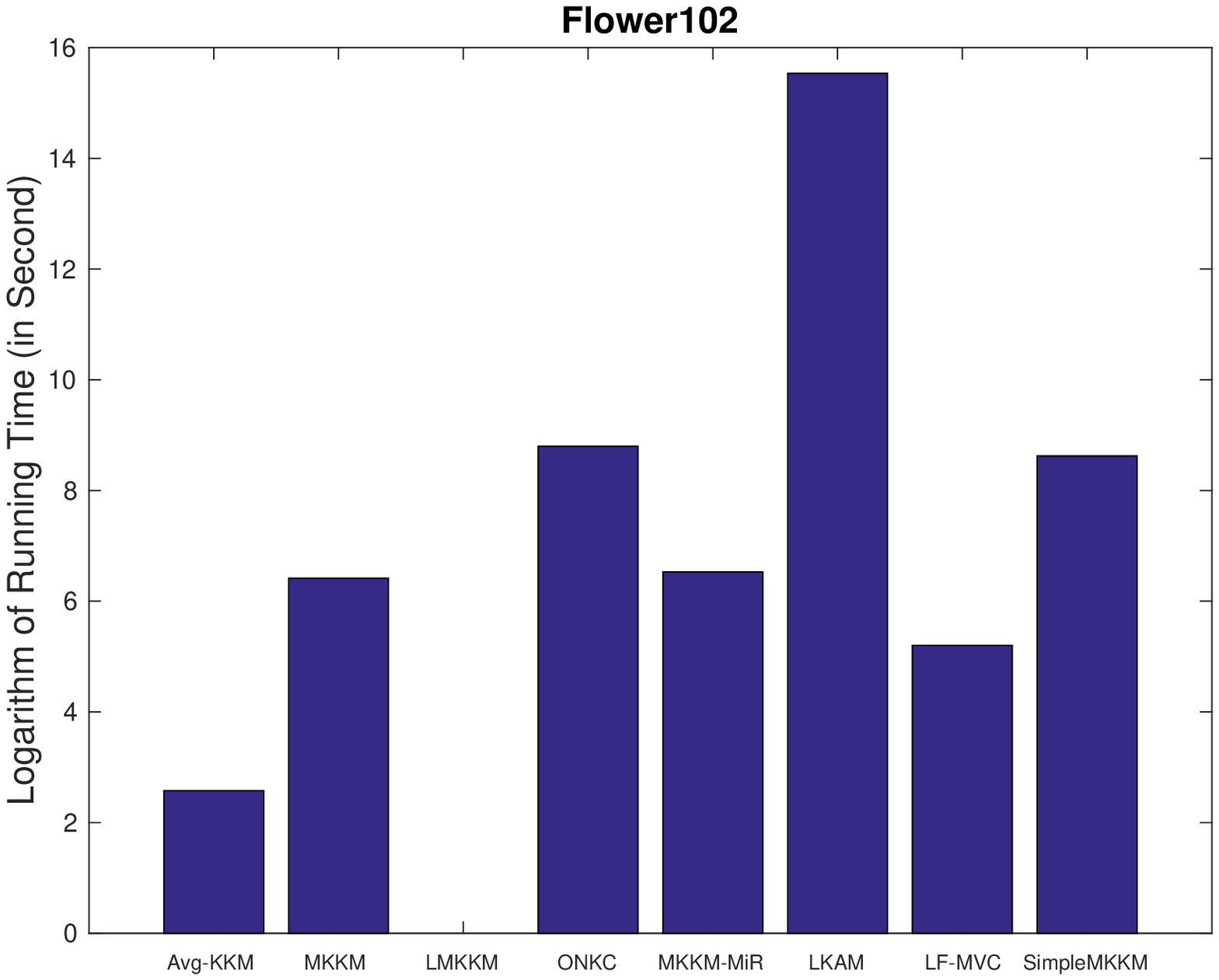}\label{RunningTime_Flower102}}%
\subfigure{\includegraphics[width=0.33\textwidth]{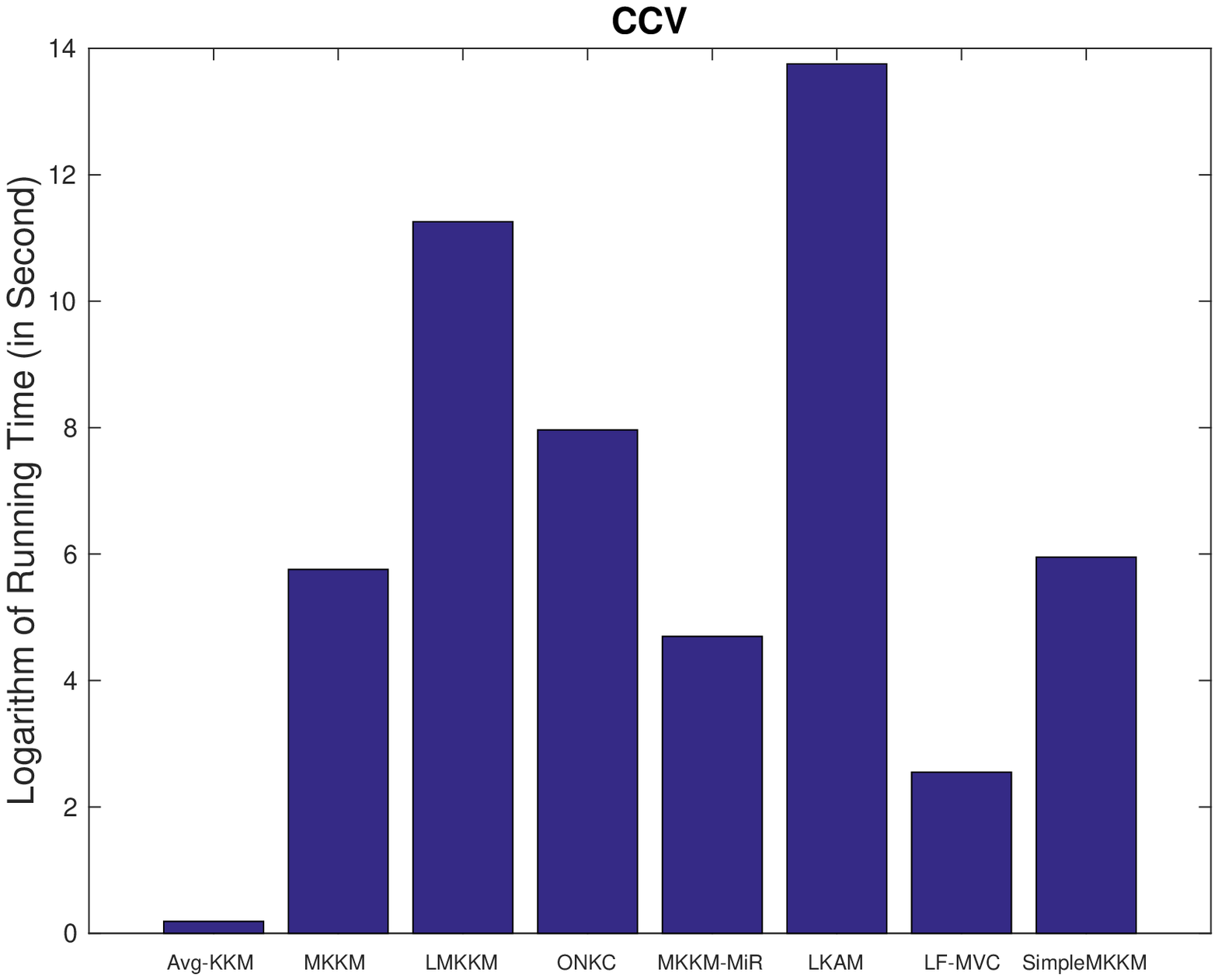}\label{RunningTime_CCV}}\\[-20pt]
\subfigure{\includegraphics[width=0.33\textwidth]{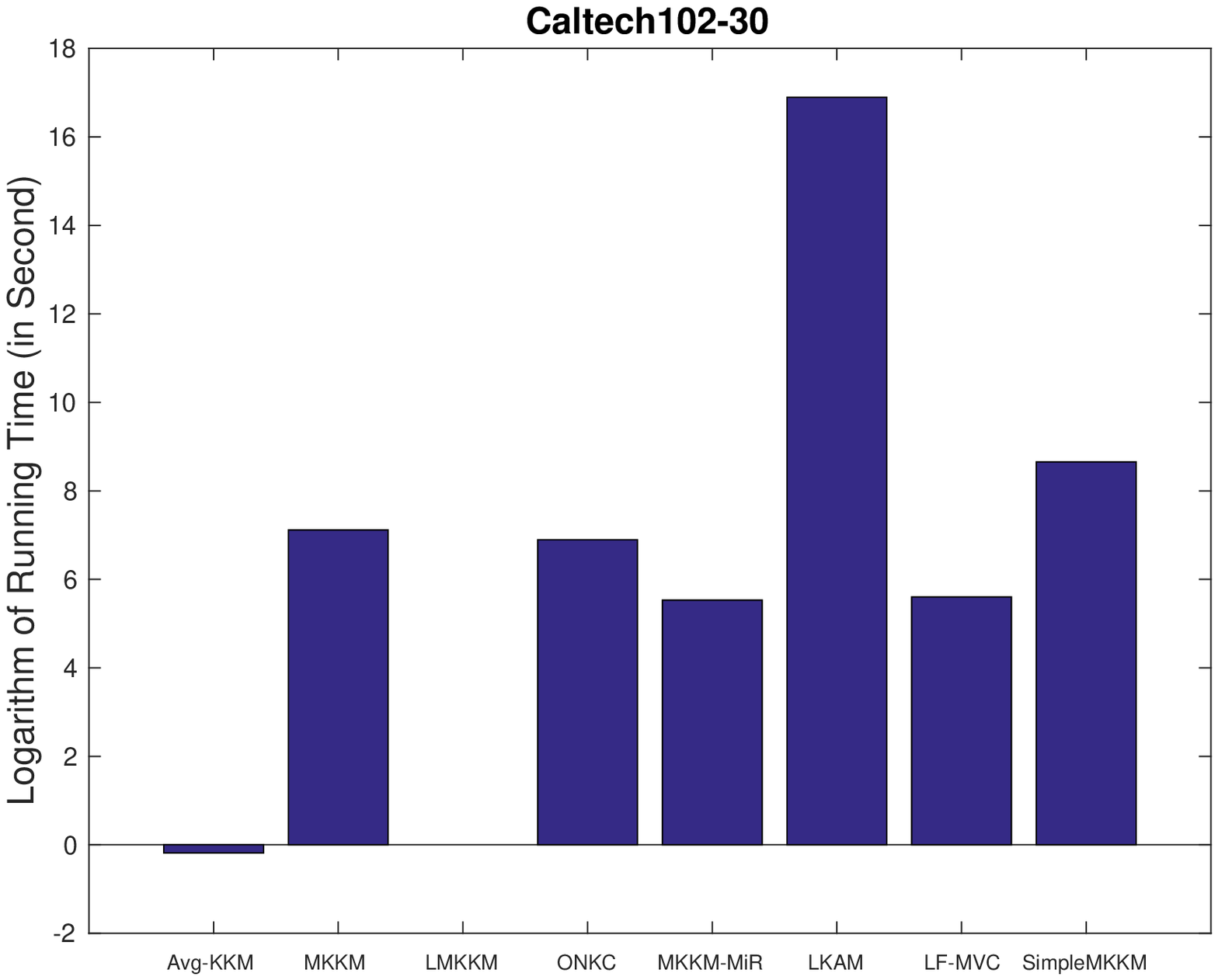}\label{RunningTime_Caltech102-30}}%
\subfigure{\includegraphics[width=0.33\textwidth]{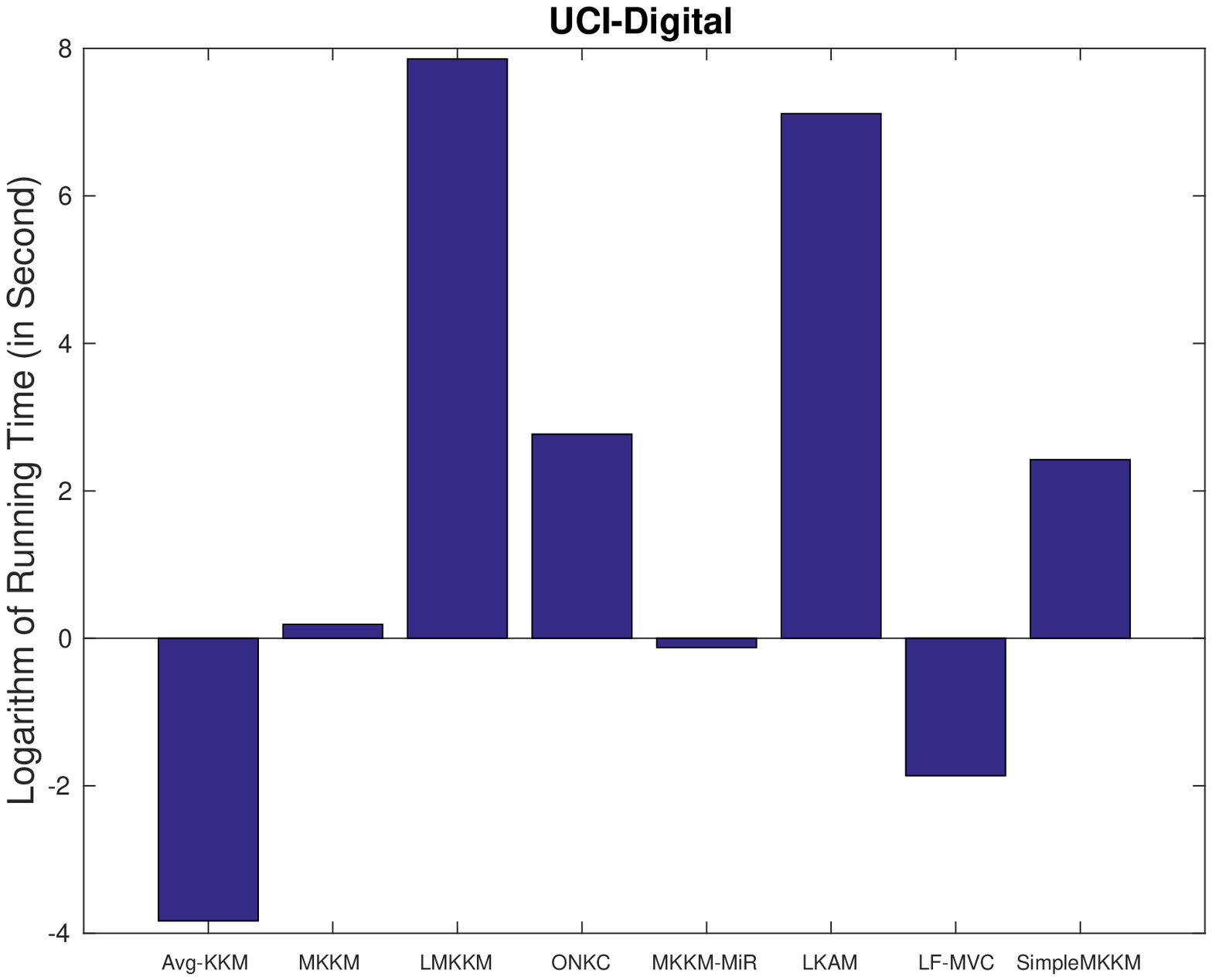}\label{RunningTime_UCIdigital}}%
\subfigure{\includegraphics[width=0.33\textwidth]{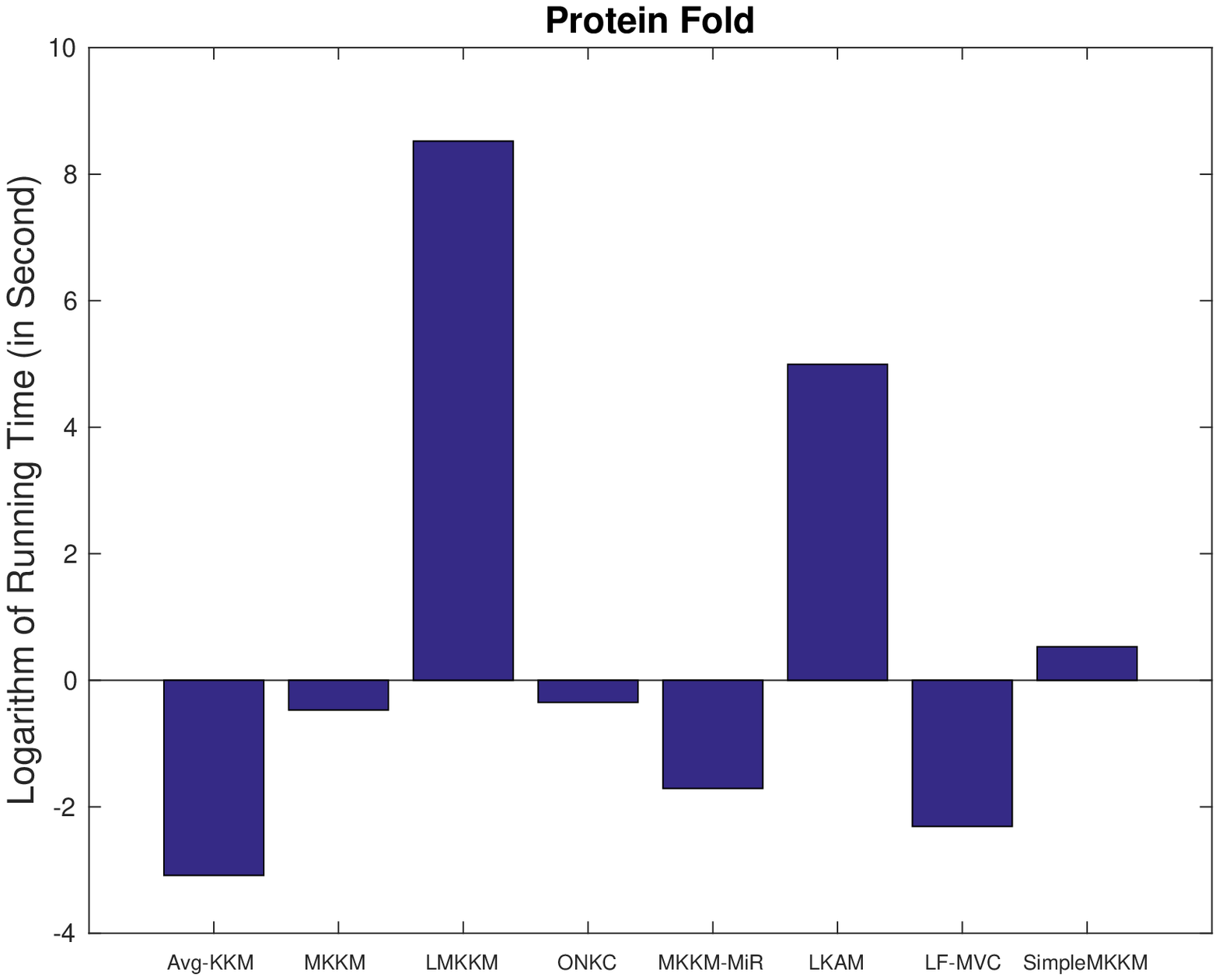}\label{RunningTime_ProteinFold}}\\
\vspace{-18pt}
\caption{Run time of different algorithms on six benchmark datasets (in seconds). The experiments are conducted on a PC  with  Intel(R) Core(TM)-i7-5820 3.3 GHz CPU and 32G RAM in MATLAB environment. SimpleMKKM is comparably fast to alternatives while providing superior performance and requiring no hyper-parameter tuning. Results for other datasets are omitted due to space limit.}\label{FigRunningTime}
\vspace{-8pt}
\end{figure*}

\begin{figure*}[!ht]
\vspace{-5pt}
\centering
\subfigure{\includegraphics[width=0.165\textwidth]{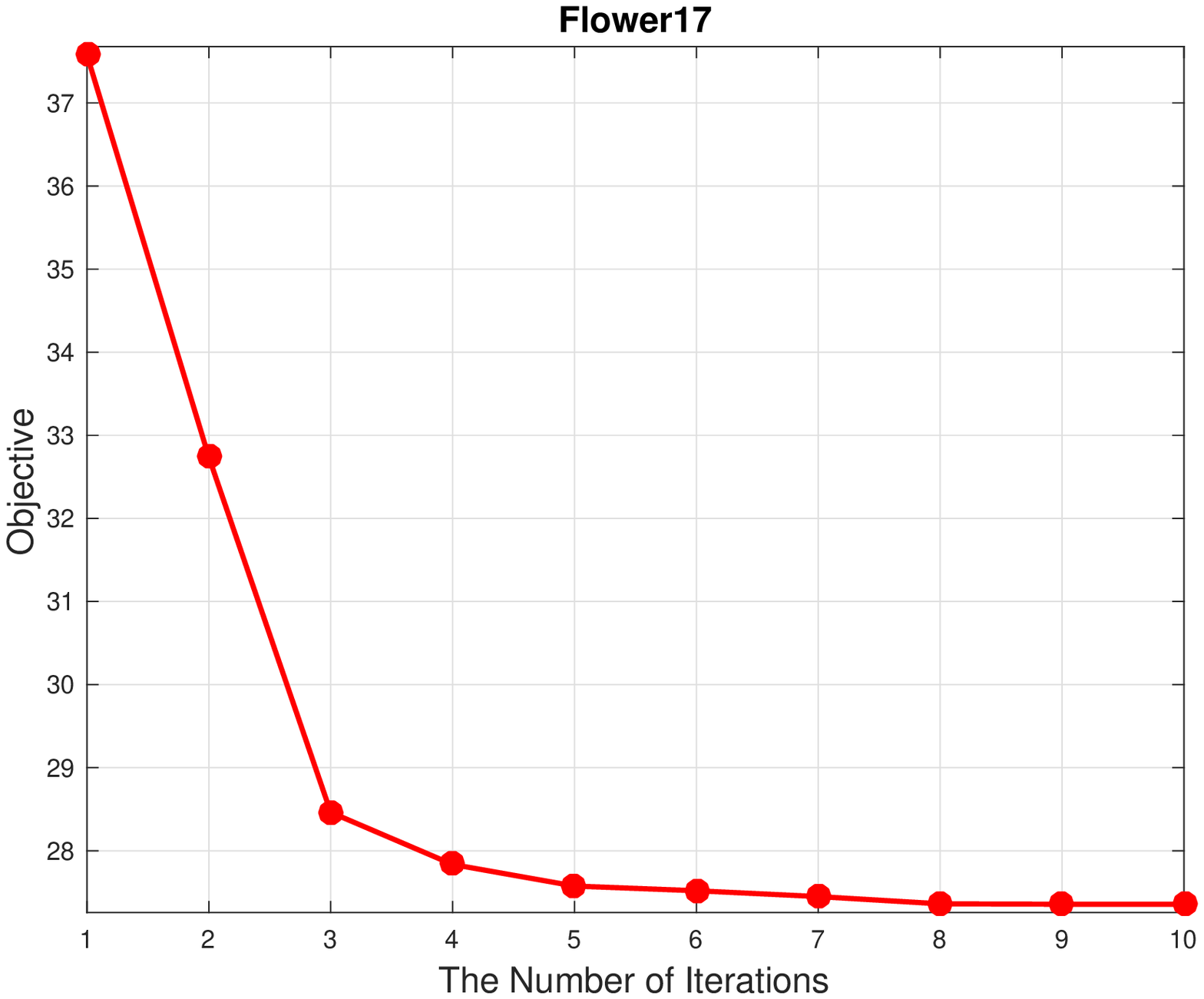}\label{Objective_Flower17}}%
\subfigure{\includegraphics[width=0.165\textwidth]{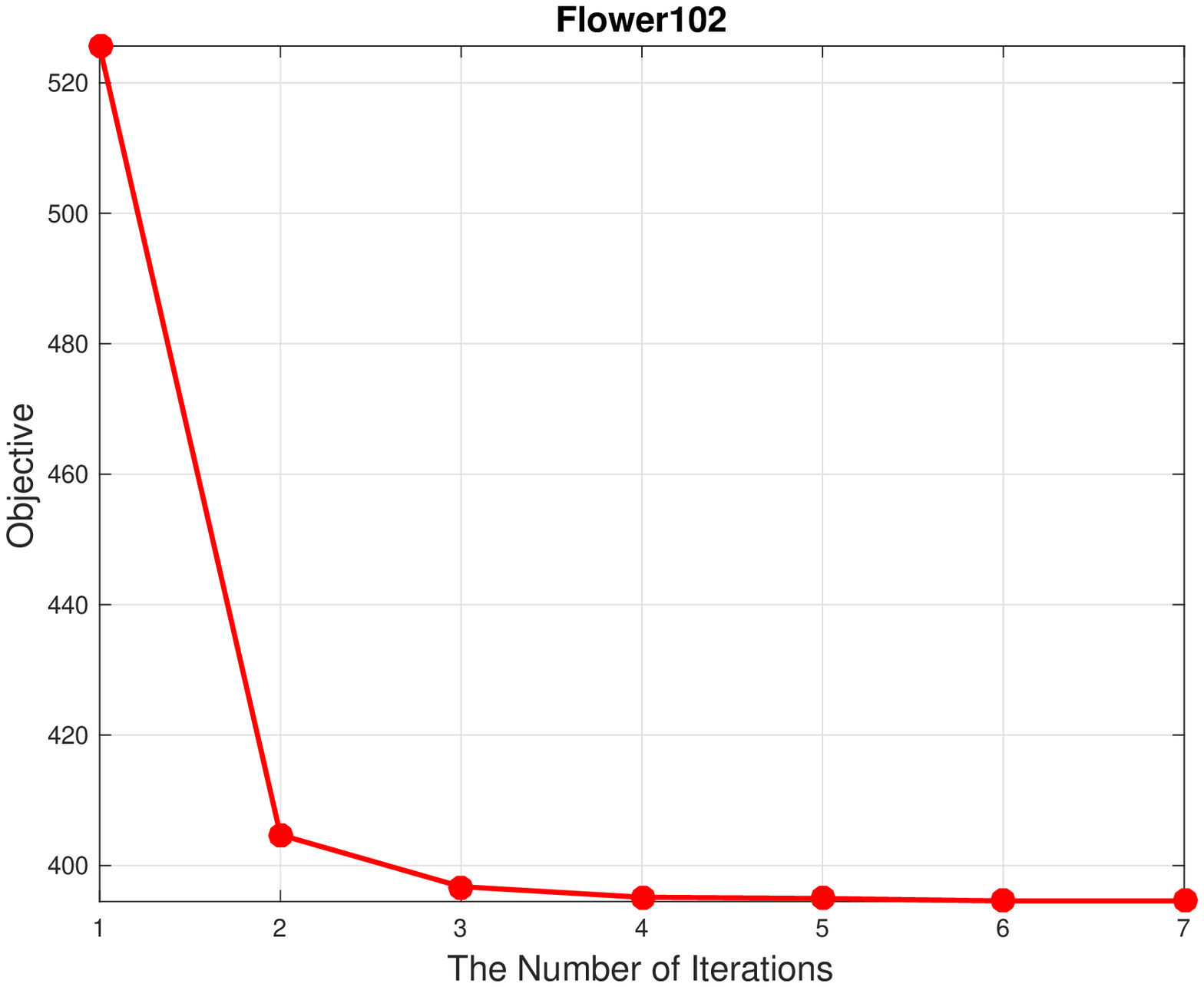}\label{Objective_Flower102}}%
\subfigure{\includegraphics[width=0.165\textwidth]{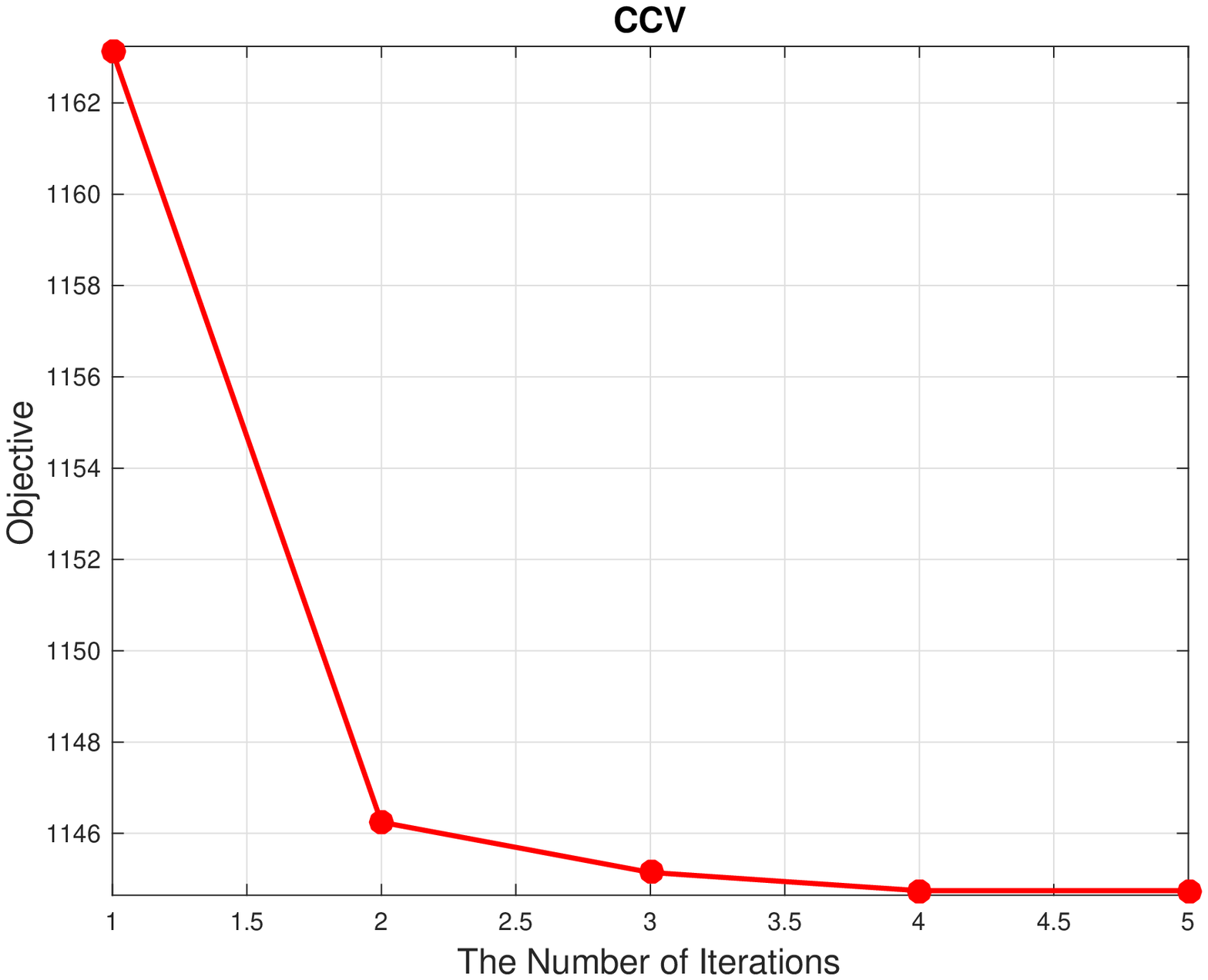}\label{Objective_CCV}}%
\subfigure{\includegraphics[width=0.165\textwidth]{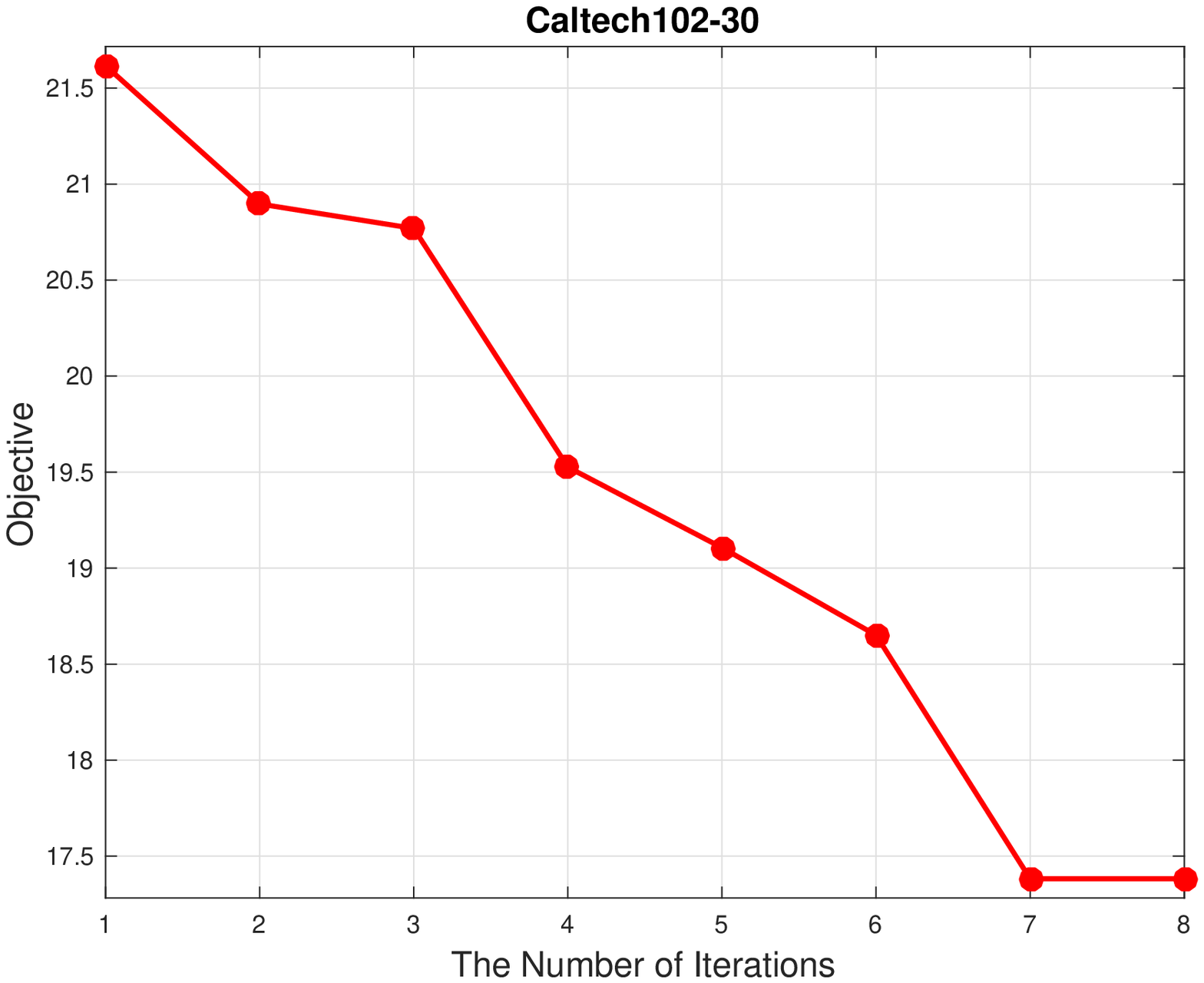}\label{Objective_Caltech102-30}}%
\subfigure{\includegraphics[width=0.165\textwidth]{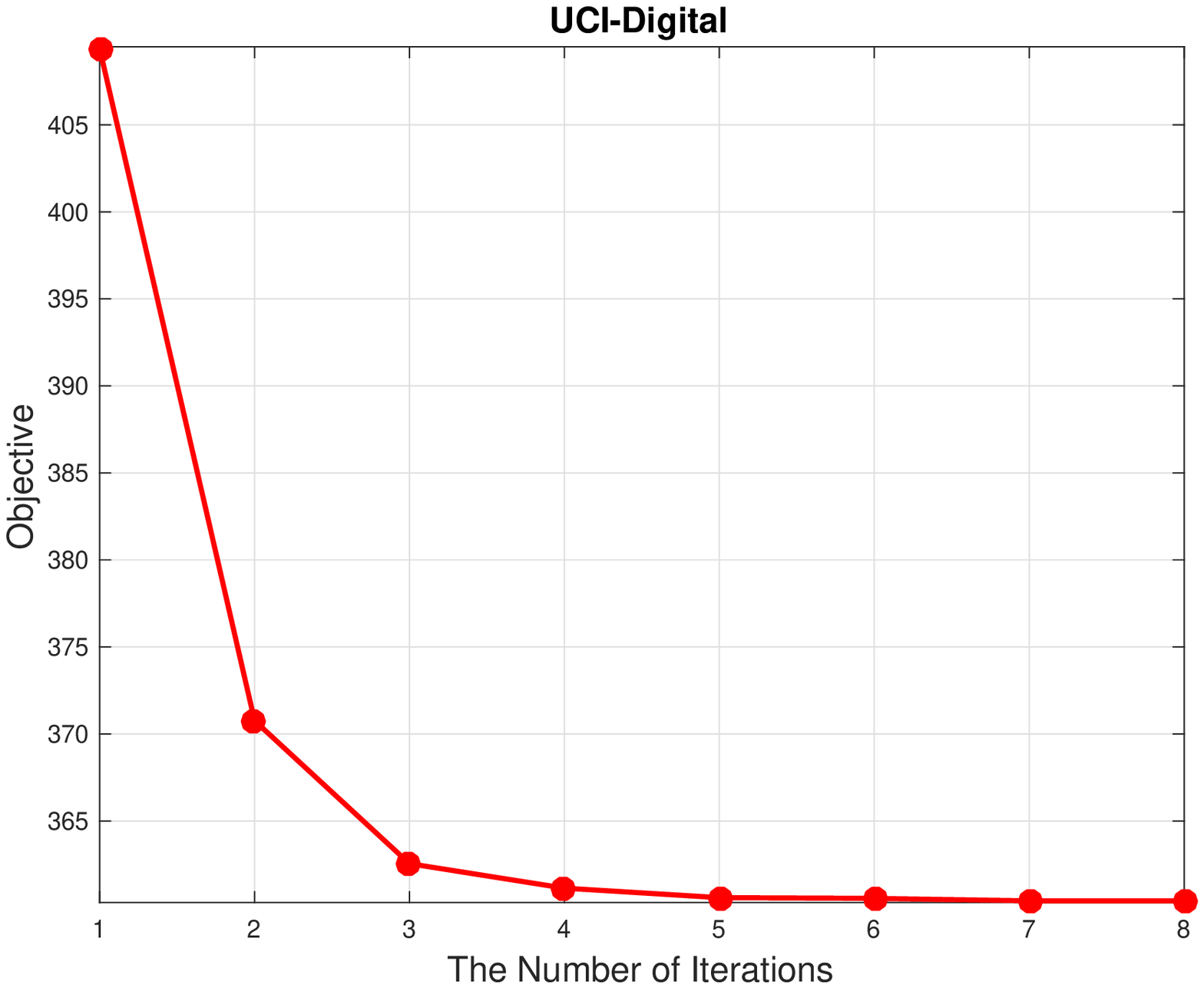}\label{Objective_UCIdigital}}%
\subfigure{\includegraphics[width=0.165\textwidth]{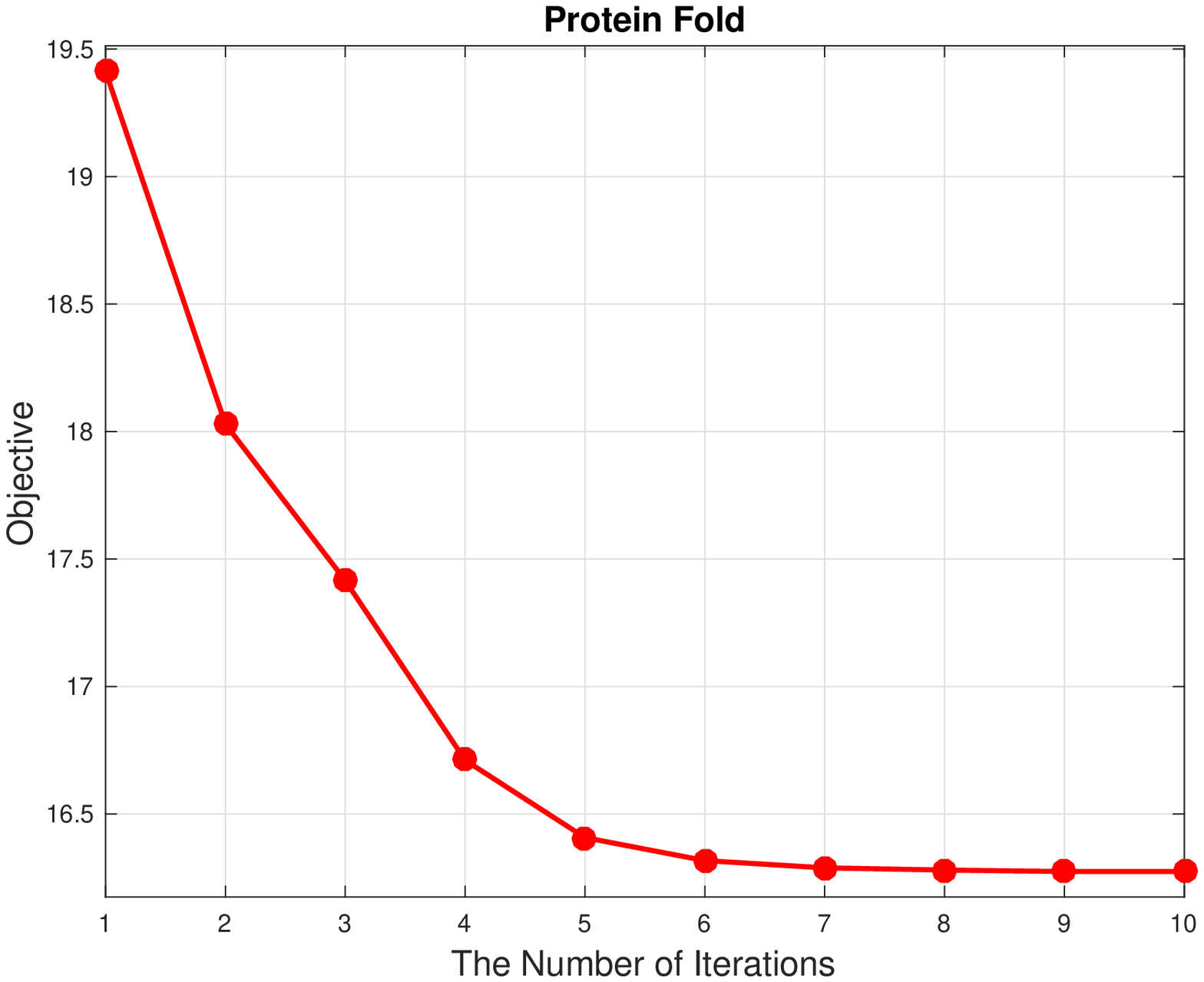}\label{Objective_ProteinFold}}\\
\vspace{-10pt}
\caption{The objective of SimpleMKKM decreases with iterations. The curves for other datasets are omitted due to space limit.}\label{FigObjective}
\vspace{-10pt}
\end{figure*}

For all algorithms, we repeat each experiment $50$ times with random initialization to reduce the effect of randomness caused by \kmeans, and report the means and variation. We next thoroughly study  SimpleMKKM in terms of: clustering performance, the learned kernel weights, running time and algorithm convergence.

Along with SimpleMKKM, we ran another eight comparative algorithms in recent MKC literature, including
\vspace{-10pt}
\begin{itemize}
\item \textbf{Average kernel \kmeans{} (Avg-KKM)}. The consensus kernel is the uniformly combined base kernels, which is taken as the input of kernel \kmeans.
\vspace{-5pt}
\item \textbf{Multiple kernel \kmeans{} (MKKM)} \cite{HuangCC12}. The base kernels are linearly combined into the consensus kernel. In addition, the combination weights are optimized along with clustering.
\vspace{-5pt}
\item \textbf{Localized multiple kernel \kmeans (LMKKM)} \cite{GonenM14}. The base kernels are combined with sample-adaptive weights.
\vspace{-5pt}
\item \textbf{Optimal neighborhood kernel clustering (ONKC)} \cite{LiuZWLDZY17}. The consensus kernel is chosen from the neighbor of linearly combined base kernels.
\vspace{-5pt}
\item \textbf{Multiple kernel \kmeans with matrix-induced regularization (MKKM-MiR)} \cite{LiuDY0Z16}. The optimal combination weights are learned by introducing a matrix-induced regularization term to reduce the redundancy among the base kernels.
\vspace{-5pt}
\item \textbf{Mulitple kernel clustering with local alignment maximization (LKAM)} \cite{LiL0DYZ16}. The similarity of a sample to its $k$-nearest neighbors, instead of all samples, is aligned with the ideal similarity matrix.
\vspace{-5pt}
\item \textbf{Multi-view clustering via late fusion alignment maximization (LF-MVC)} \cite{WangLZTLHXY19}. Base partitions are first computed within corresponding data views and then integrated into a consensus partition.
\vspace{-5pt}
\item \textbf{MKKM-MM} \cite{Bang2018robust}. It proposes a $\min_{\mathbf{H}}$-$\max_{\boldsymbol{\gamma}}$ formulation that combines views in a way to reveal high within-cluster variance in the combined kernel space and then updates clusters by minimizing such variance.
\end{itemize}
The implementations of the above algorithms are publicly available in corresponding papers, and we directly adopt them without revision in our experiments. Among all the compared algorithms, ONKC \cite{LiuZWLDZY17}, MKKM-MiR \cite{LiuDY0Z16}, LKAM \cite{LiL0DYZ16} and LF-MVC \cite{WangLZTLHXY19} have hyper-parameters to be tuned.  We reuse their released Matlab codes and carefully tuned the hyper-parameters according to their setup to produce the best possible results on each dataset.

\begin{table}[!ht]
\centering
\vspace{-2pt}
\caption{Empirical comparison of SimpleMKKM with KAMM-R and KAMM-A on Flower17.}\label{ClusteringACC2}
\vspace{-2pt}
\begin{center}
\begin{small}
\begin{sc}
\resizebox{0.8\linewidth}{!}{
\begin{tabular}{lccc}
\toprule
\scriptsize{Dataset}     &  \scriptsize{KAMM-R} & \scriptsize{KAMM-A} & \scriptsize{SimpleMKKM}\\
\toprule
ACC  & {35.0$\pm$ 0.4} & {54.1$\pm$ 1.8} & \textbf{58.9$\pm$ 1.3}  \\
NMI & {37.6$\pm$ 0.3} & {54.1$\pm$ 1.4} & \textbf{57.3$\pm$ 0.8}  \\
Purity & {36.6 $\pm$ 0.5} & {55.1$\pm$ 1.8} &  \textbf{60.2$\pm$ 1.4} \\
\bottomrule
\end{tabular}}
\end{sc}
\end{small}
\end{center}
\vspace{-8pt}
\end{table}

\subsection{Experimental Results}

\paragraph{Clustering Performance}
Table \ref{ClusteringACC} presents the ACC, NMI and purity comparison of the above algorithms. From this table, we have the following observations:
\vspace{-8pt}
\begin{itemize}
\item The proposed SimpleMKKM consistently and significantly outperforms MKKM. For example, it exceeds MKKM by $12.7\%,\,16\%,\,6.1\%,\,3.1\%,\,34.6\%,\,4.4\%,\,7.2\%,\,8.9\%,$\\ $10.1\%,\,10.6\%$ and $11.7\%$ in terms of ACC on all benchmark datasets. These results demonstrate the efficacy of its min-max formulation and associated optimization algorithm.
\vspace{-4pt}
\item MKKM-MM \cite{Bang2018robust} is the first try in literature to improve MKKM via minimization-maximization. As observed, it does improve the MKKM. However the improvement over MKKM is marginal on all datasets. Meanwhile, the proposed SimpleMKKM significantly outperforms MKKM-MM. This once again demonstrates the advantage of our formulation and the associated optimization strategy.
\vspace{-4pt}
\item Our SimpleMKKM achieves comparable or slightly better performance than MKKM-MiR \cite{LiuDY0Z16},  ONKC \cite{LiuZWLDZY17}, and LF-MVC \cite{WangLZTLHXY19}, all of which are considered the state of the art in multi-kernel clustering. Note that all of these algorithms have several hyper-parameters to tune due to the incorporation of regularization on the kernel weight $\boldsymbol{\gamma}$. Though demonstrating promising clustering performance, these algorithms need to take a lot of effort to determine the best hyper-parameters in practical applications. And parameter tuning may be impossible in real applications where there is no ground truth clustering to optimize. In contrast, our SimpleMKKM is parameter-free.
\end{itemize}

In summary, SimpleMKKM demonstrates superior clustering performance over the alternatives on all datasets and has no hyper-parameter to be tuned. We expect that the simplicity and efficacy of SimpleMKKM will make it a good option to be considered for practical clustering applications. Note that some results of LMKKM \cite{GonenM14} are not reported due to out-of-memory errors, which are caused by its cubic computational and memory complexity.

\paragraph{Advantage of Formulation and Optimization} In order to show the advantage of the proposed formulation and optimization algorithm, we conduct an extra experiment on Flower17 to compare alternatives KAMM-R and KAMM-A. KAMM-R denotes optimizing kernel alignment $\mathrm{\mathbf{K}_{\boldsymbol{\gamma}\mathbf{H}\mathbf{H}^{\top}}}$ via maximizing $\boldsymbol{\gamma}$ and maximizing $\mathbf{H}$ with reduced gradient descent, and KAMM-A denotes optimizing this criterion via minimizing $\boldsymbol{\gamma}$ and maximizing $\mathbf{H}$ with alternate optimization (see Section~\ref{sec:formulation} for discussion). 
KAMM-A has the same objective as SimpleMKKM, but it uses the widely adopted alternate optimization to solve it in place of our newly derived reduced gradient algorithm. From the results reported in Table~\ref{ClusteringACC2}, we clearly observe that: (1) Our SimpleMKKM formulation has significant advantage over KAMM-R, demonstrating the value of our novel min-max objective; (2) It also outperforms KAMM-A, which confirms that our new gradient-based optimization algorithm is also much better than the widely used alternate optimization. 

\paragraph{Kernel Weight Analysis}
We next investigate the kernel weights learned by the compared algorithms. The results are plotted in Figure \ref{FigKernelWeights}. We can see that the kernel weights learned by MKKM are extremely sparse on some datasets such as UCI-Digital, which is caused by the alternate optimization. This sparsity insufficiently exploits the multiple kernel matrices  and explains the weak performance of MKKM. For example, the clustering accuracy of MKKM on UCI-Digital is only $47.2\%$. However, despite the $\ell_{1}$-norm constraint on $\boldsymbol{\gamma}$, the kernel weights learned by our SimpleMKKM are all non-sparse on all datasets, which contributes to its superior clustering performance. This non-sparsity of the learned kernel weights is attributed to our new reduced gradient descent algorithm, which in turn is derived based on our new min-max kernel alignment objective.

\paragraph{Runtime and Convergence}
We also report the running time of the compared algorithms in Figure \ref{FigRunningTime}. As observed, in addition to significantly improving performance, SimpleMKKM does not considerably increase the running time compared with MKKM and its variants. The objective of SimpleMKKM with iterations is reported in Figure~\ref{FigObjective}. From these figures, we observe that the objective is monotonically decreased and the algorithm usually converges in less than ten iterations on all datasets. This corroborates our earlier theoretical analysis of the nature of our proposed objective and efficient optimisation algorithm.

\section{Conclusion}

In this paper, we have extended the widely used supervised kernel alignment criterion to clustering, and introduce a novel clustering objective of by minimizing alignment for $\boldsymbol{\gamma}$ and maximizing it for $\mathbf{H}$. We show that this novel objective can be transformed into a minimization problem which is differentiable and amenable to a solution by reduced gradient descent. This makes SimpleMKKM unique among MKC alternatives, in not requiring a local-minimum prone alternating coordinate descent strategy.

We derive a generalization bound for our approach using global Rademacher complexity analysis. Comprehensive experiments  demonstrate the effectiveness of SimpleMKKM. We expect that the simplicity, lack of hyper-parameters, and efficacy of SimpleMKKM will make it a go-to solution for practical multi-kernel clustering applications in future.
Future work may aim to extend SimpleMKKM to handle incomplete kernels, study further applications, and derive convergence rates using local Rademacher complexity analysis \cite{KloftB12,CortesKM13}.
\bibliography{myICML2020}
\bibliographystyle{icml2020}

\end{document}